\documentclass{article} 

\newif\ifsup\suptrue      

\usepackage{nips14submit_e,times}
\usepackage{hyperref}
\usepackage{url}
\hypersetup{
    bookmarks=true,         
    unicode=false,          
    pdftoolbar=true,        
    pdfmenubar=true,        
    pdffitwindow=false,     
    pdfstartview={FitH},    
    pdftitle={My title},    
    pdfauthor={Author},     
    pdfsubject={Subject},   
    pdfcreator={Creator},   
    pdfproducer={Producer}, 
    pdfkeywords={keyword1} {key2} {key3}, 
    pdfnewwindow=true,      
    colorlinks=true,       
    linkcolor=red,          
    citecolor=blue,        
    filecolor=magenta,      
    urlcolor=cyan           
}

\usepackage[bf,scriptsize,skip=2pt]{caption}
\usepackage{latexsym}
\usepackage{amsmath}
\usepackage{amssymb}
\usepackage{mathtools}
\usepackage{enumerate}
\usepackage{color}
\usepackage{stmaryrd}
\usepackage{pgfplots}
\usepackage{wrapfig}
\usepackage{algorithmicx}
\usepackage[noend]{algpseudocode}
\usepackage[ruled]{algorithm}
\usepackage{dsfont}
\newcommand{\citep}[1]{\cite{#1}}

\usepackage{tikz}
\usetikzlibrary{patterns}
\usetikzlibrary{decorations.shapes}
\usetikzlibrary{decorations.markings}
\usetikzlibrary{shapes}


\usepackage[disable]{todonotes}
\newcommand{\todot}[2][]{\todo[inline,color=blue!20!white,#1]{#2}}


\newcommand{\defined}{\vcentcolon =}
\newcommand{\rdefined}{=\vcentcolon}
\newcommand{\E}{\mathbb E}

\newcommand{\R}{\mathbb R}

\newcommand{\N}{\mathbb N}
\newcommand{\sr}[1]{\stackrel{#1}}
\newcommand{\set}[1]{\left\{#1\right\}}
\newcommand{\ind}[1]{\mathds{1}\!\set{#1}}

\newcommand{\argmax}{\operatornamewithlimits{arg\,max}}

\newcommand{\floor}[1]{\left \lfloor {#1} \right\rfloor}
\newcommand{\ceil}[1]{\left \lceil {#1} \right\rceil}
\newcommand{\KL}{\operatorname{KL}}

\newcommand{\eqn}[1]{\begin{align}#1\end{align}}
\newcommand{\eq}[1]{\begin{align*}#1\end{align*}}


\newcommand{\etal}{et.\ al.\ }

\def\subsubsect#1{\vspace{1ex plus 0.5ex minus 0.5ex}\noindent{\bf\boldmath{#1.}}}

\usepackage[amsmath,amsthm,thmmarks]{ntheorem}

\theoremstyle{plain}
\newtheorem{theorem}{Theorem}

\newtheorem{lemma}[theorem]{Lemma}

\theoremstyle{definition}

\newtheorem{remark}[theorem]{Remark}
\theoremstyle{remark}


\let\epsilon\varepsilon

\renewcommand{\P}[1]{\mathbb{P}\left\{#1\right\}}

\newcommand{\best}{{i^*}}
\newcommand{\gap}[1]{\Delta_{#1}}
\newcommand{\mingap}{\Delta_{\min}}
\newcommand{\maxgap}{\Delta_{\max}}
\newcommand{\true}{\theta^*}

\newcommand{\drawsimplegraph}{
\draw (0,0) edge[->] (4.2,0);
\draw (0,0) edge[->] (0,4.2);

\node[yshift=-0.3cm] at (0,0) {$-1$};
\node[yshift=-0.3cm] at (2,0) {$0$};
\node[yshift=-0.3cm] at (4,0) {$1$};
}

\newcommand{\drawgraph}{
\draw (0,0) edge[->]  (4.2,0);
\draw (0,0) edge[->] node[left=0.6cm] {$\mu$}    (0,4.2);

\node[yshift=-0.3cm] at (0,0) {$-1$};
\node[yshift=-0.3cm] at (2,0) {$0$};
\node[yshift=-0.3cm] at (4,0) {$1$};

\node[xshift=-0.2cm,anchor=east] at (0,0) {$-1$};
\node[xshift=-0.2cm,anchor=east] at (0,2) {$0$};
\node[xshift=-0.2cm,anchor=east] at (0,4) {$1$};
}

\tikzstyle{arm1}=[]
\tikzstyle{arm2}=[dotted]
\tikzstyle{arm3} = [thin,decorate,decoration={snake,amplitude=1pt,segment length=2pt}]

\tikzstyle{infinite}=[thin,pattern=crosshatch,opacity=0.15]
\tikzstyle{finite}=[draw=none]

\setlength\intextsep{2pt}
\nipsfinalcopy

\title{Bounded Regret for Finite-Armed Structured Bandits}

\author{
Tor Lattimore \\
Department of Computing Science \\
University of Alberta, Canada \\
\texttt{tlattimo@ualberta.ca} \\
\And
R\'emi Munos \\
INRIA \\
Lille, France$^1$ \\
\texttt{remi.munos@inria.fr} \\
}

\begin{document}

\maketitle
\footnotetext[1]{Current affiliation: Google DeepMind.}

\begin{abstract}
We study a new type of $K$-armed bandit problem where the expected return of one arm 
may depend on the returns of other arms. We present a new algorithm for this general class
of problems and show that under certain circumstances it is possible to achieve finite expected cumulative
regret. We also give problem-dependent lower bounds on the cumulative regret showing that at least in special
cases the new algorithm is nearly optimal.
\end{abstract}

\section{Introduction}

\todot{Fix reference style to be consistent.}

The multi-armed bandit problem is a reinforcement learning problem with $K$ actions. At each time-step a learner must choose an action $i$ after which it receives
a reward distributed with mean $\mu_i$.
The goal is to maximise the cumulative reward. This is perhaps the simplest
setting in which the well-known exploration/exploitation dilemma becomes apparent, with a learner being forced to choose between
exploring arms about which she has little information, and exploiting by choosing the arm that currently appears optimal. 

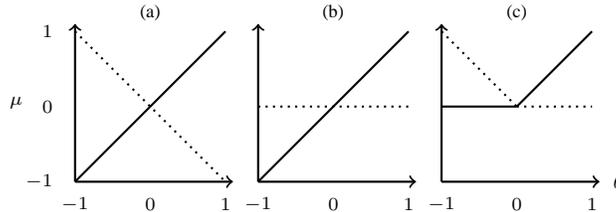
\begin{wrapfigure}[8]{R}{8.4cm}
\vspace{-0.6cm}
\hspace{-0.3cm}
\scriptsize
\begin{tikzpicture}[thick,scale=0.5]
\draw[arm1] (0,0) edge[-] (4,4);
\draw[arm2]  (0,4) edge[-] (4,0);
\node at (2,4.5) {(a)};
\drawgraph
\end{tikzpicture}
\hspace{-0.1cm}
\begin{tikzpicture}[thick,scale=0.5]
\draw[arm1] (0,0) edge[-] (4,4);
\draw[arm2]  (0,2) edge[-] (4,2);
\node at (2,4.5) {(b)};
\drawsimplegraph
\end{tikzpicture}
\hspace{-0.1cm}
\begin{tikzpicture}[thick,scale=0.5]
\draw[finite] (0,0) rectangle (4,4);
\draw[arm1] (0,2) -- (2,2);
\draw[arm2] (0,4) -- (2,2);
\draw[arm1] (2,2) -- (4,4);
\draw[arm2] (2,2) -- (4,2);
\node at (2,4.5) {(c)};
\drawsimplegraph
\node at (4.7,0) {$\theta$};
\end{tikzpicture}
\caption{Examples}
\end{wrapfigure}
We consider a general class of $K$-armed bandit problems where the expected return of each arm may be dependent on other arms.
This model has already been considered when the dependencies are linear \cite{MRPT09} and also in the general setting studied here \cite{GL97,ATA89}. 
Let $\Theta\ni \true$ be an  arbitrary parameter space and define the expected return of arm $i$ 
by $\mu_i(\true) \in \R$. The learner is permitted to know the functions $\mu_1 \cdots \mu_K$, but not the true parameter $\true$.
The unknown parameter $\true$ determines the mean reward for
each arm. The performance of a learner is measured by 
the (expected) cumulative regret, which is the difference between the expected return of the optimal policy and
the (expected) return of the learner's policy.
$R_n \defined n \max_{i\in 1\cdots K} \mu_i(\theta^*) - \sum_{t=1}^n \mu_{I_t}(\theta^*)$
where $I_t$ is the arm chosen at time-step $t$.

A motivating example is as follows. Suppose a long-running company must decide each week whether or not to 
purchase some new form of advertising with unknown expected returns.
The problem may be formulated using the new setting by letting $K = 2$ and $\Theta = [-\infty, \infty]$. We assume the base-line performance
without purchasing the advertising is known and so define $\mu_1(\theta) = 0$ for all $\theta$. 
The expected return of choosing to advertise is $\mu_2(\theta) = \theta$ (see Figure (b) above).

Our main contribution is a new algorithm based on UCB \cite{ACF02} for the structured bandit problem with strong problem-dependent guarantees on the regret. 
The key improvement over UCB is that the new algorithm enjoys finite regret in many cases while UCB suffers logarithmic regret unless all arms have the
same return. 
For example, in (a) and (c) above we show that finite regret is possible for all $\true$, while in the advertising problem finite regret is 
attainable if $\true \geq 0$.
The improved algorithm exploits the known structure and so avoids the famous negative results by Lai and Robbins \cite{LR85}. 
One insight from this work is that knowing the return of the optimal arm and a bound on the minimum gap is not the only information that
leads to the possibility of finite regret. In the examples given above neither quantity is known, but the assumed structure is nevertheless sufficient 
for finite regret. 

Despite the enormous literature on bandits, as far as we are aware this is the first time this setting has been considered with the aim of achieving
finite regret. There has been substantial work on exploiting various kinds of structure to reduce an otherwise impossible problem to one where
sub-linear (or even logarithmic) regret is possible \cite[and references therein]{RV13,AKS11,BMSS08}, but the focus is usually on efficiently
dealing with large action spaces rather than sub-logarithmic/finite regret.
The most comparable previous work studies the case where both the return of the best arm and a bound on the minimum gap between the best arm
and some sub-optimal arm is known \citep{BPR13,BC13}, which extended the permutation bandits studied by Lai and Robbins \cite{LR84b} and more general results
by the same authors \cite{LR84}. 
Also relevant is the paper by Agrawal \etal \cite{ATA89}, which studied a similar setting, but where $\Theta$ was finite. 
Graves and Lai \cite{GL97} extended the aforementioned contribution to continuous parameter spaces (and also to MDPs). Their work differs from ours in a number
of ways. Most notably, their objective is to compute exactly the asymptotically optimal regret in the case where finite regret is {\it not} possible.
In the case where finite regret is possible they prove only that the optimal regret is sub-logarithmic, and do not present any explicit bounds on the actual regret.
Aside from this the results depend on the parameter space being a metric space and they assume that the optimal policy is locally constant about the
true parameter.


\section{Notation}\label{sec:notation}

\subsubsect{General}
Most of our notation is common with \cite{BC12}.
The indicator function is denoted by $\ind{expr}$ and is $1$ if $expr$ is true and $0$ otherwise.
We use $\log$ for the natural logarithm. Logical and/or are denoted by $\wedge$ and $\vee$ respectively.
Define function $\omega(x) = \min \set{y \in \N : z \geq x \log z,\; \forall z \geq y}$, which
satisfies $\log \omega(x) \in O(\log x)$. In fact, $\lim_{x\to\infty} \log(\omega(x)) / \log(x) = 1$.

\subsubsect{Bandits}
Let $\Theta$ be a set.
A $K$-armed structured bandit is characterised by a set of functions $\mu_k:\Theta \to \R$
where $\mu_k(\theta)$ is the expected return of arm $k \in A \defined \set{1, \cdots, K}$ given unknown parameter $\theta$.
We define the mean of the optimal arm by the function $\mu^*:\Theta \to \R$ with $\mu^*(\theta) \defined \max_i \mu_i(\theta)$.
The true unknown parameter that determines the means is $\theta^* \in \Theta$.
The best arm is $\best \defined \argmax_i \mu_i(\theta^*)$.
The arm chosen at time-step $t$ is denoted by $I_t$ while $X_{i,s}$ is the $s$th reward obtained when sampling from arm $i$.
We denote the number of times arm $i$ has been chosen at time-step $t$ by $T_i(t)$.
The empiric estimate of the mean of arm $i$ based on the first $s$ samples is $\hat \mu_{i,s}$. 
We define the gap between the means of the best arm and arm $i$ by $\gap{i} \defined \mu^*(\theta^*) - \mu_i(\theta^*)$.
The set of sub-optimal arms is $A' \defined \set{i \in A : \gap{i} > 0}$. The minimum gap is $\mingap \defined \min_{i \in A'} \gap{i}$ while
the maximum gap is $\maxgap \defined \max_{i \in A} \gap{i}$.
The cumulative regret is defined
\eq{
R_n \defined \sum_{t=1}^n \mu^*(\theta^*) - \sum_{t=1}^n \mu_{I_t} = \sum_{t=1}^n \gap{I_t}
}
Note quantities like $\gap{i}$ and $\best$ depend on $\theta^*$, which is omitted from the notation.
As is rather common we assume that the returns are sub-gaussian, which means that if $X$ is the return sampled from some arm, then
$\ln \E \exp(\lambda(X - \E X)) \leq \lambda^2 \sigma^2 / 2$.
As usual we assume that $\sigma^2$ is known and does not depend on the arm.
If $X_1 \cdots X_n$ are sampled independently from some arm with mean $\mu$ and $S_n = \sum_{t=1}^n X_t$,
then the following maximal concentration inequality is well-known.
\eq{
\P{\max_{1\leq t\leq n} |S_t -t\mu| \geq \epsilon} \leq 2\exp\left(-{\epsilon^2 \over 2n\sigma^2}\right).
}
A straight-forward corollary is that
$\displaystyle \P{|\hat \mu_{i,n} - \mu_i| \geq \epsilon} \leq 2\exp\left(-{\epsilon^2 n \over 2\sigma^2}\right)$.

It is an important point that $\Theta$ is completely arbitrary. The classic multi-armed bandit can be obtained by setting
$\Theta = \R^K$ and $\mu_k(\theta) = \theta_k$, which removes all dependencies between the arms.
The setting where the optimal expected return is known to be zero and a bound on $\gap{i} \geq \epsilon$ is known can
be regained by choosing $\Theta = (-\infty,-\epsilon]^K \times \set{1,\cdots,K}$ and
$\mu_k(\theta_1, \cdots, \theta_K, i) = \theta_k \ind{k \neq i}$.
We do not demand that $\mu_k:\Theta \to \R$ be continuous, or even that $\Theta$ be endowed with a topology.


\section{Structured UCB}\label{sec:ucb}
We propose a new algorithm called UCB-S that is a straight-forward modification of UCB \citep{ACF02}, but where the known structure
of the problem is exploited. At each time-step it constructs a confidence interval about the mean of each arm. From this
a subspace $\tilde\Theta_t \subseteq \Theta$ is constructed, which contains the true parameter $\theta$ with high probability.
The algorithm takes the optimistic action over all $\theta \in \tilde\Theta_t$.

\begin{algorithm}[H]
\caption{UCB-S}
\label{alg:ucbd}
\begin{algorithmic}[1]
\State {\bf Input:} functions $\mu_1, \cdots, \mu_k:\Theta \to [0,1]$
\For{$t \in 1,\ldots,\infty$}
\State Define confidence set
$\displaystyle \tilde\Theta_t \leftarrow \set{\tilde \theta : \forall i,\;\; \left|\mu_i(\tilde \theta) - \hat \mu_{i,T_i(t-1)}\right| < 
\sqrt{{\alpha\sigma^2 \log t \over T_i(t-1)} }}$
\If{$\tilde\Theta_t = \emptyset$}
\State Choose arm arbitrarily
\Else
\State Optimistic arm is $i \leftarrow \argmax_{i} \sup_{\tilde \theta \in \tilde\Theta_t} \mu_i(\tilde \theta)$
\State Choose arm $i$
\EndIf
\EndFor
\end{algorithmic}
\end{algorithm}

\begin{remark}
The choice of arm when $\tilde\Theta_t = \emptyset$ does not affect the regret bounds in this paper.
In practice, it is possible to simply increase $t$ without taking an action, but this complicates the analysis.
In many cases the true parameter $\theta^*$ is never identified in the sense that we do not expect 
that $\tilde\Theta_t \to \set{\theta^*}$.
The computational complexity of UCB-S depends on the difficulty of computing $\tilde\Theta_t$ and computing the optimistic arm within this set.
This is efficient in simple cases, like when $\mu_k$ is piecewise linear, but may be intractable for complex functions. 
\end{remark}

\section{Theorems}

We present two main theorems bounding the regret of the UCB-S algorithm. The first is for arbitrary $\theta^*$, which leads to a logarithmic bound
on the regret comparable to that obtained for UCB by \cite{ACF02}.
The analysis is slightly different because UCB-S maintains upper and lower confidence bounds and selects its actions optimistically from the model class,
rather than by maximising the upper confidence bound as UCB does.

\begin{theorem}\label{thm:ucb}
If $\alpha > 2$ and
$\theta \in \Theta$, then the algorithm UCB-S suffers an expected regret of at most
\eq{
\E R_n \leq {2\maxgap K(\alpha - 1) \over \alpha - 2} + \sum_{i \in A'} {8\alpha\sigma^2 \log n \over \gap{i}} + \sum_i \gap{i}
}
\end{theorem}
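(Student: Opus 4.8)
The plan is to follow the classical UCB regret decomposition, but adapted to exploit the confidence set $\tilde\Theta_t$ and the optimistic selection rule. First I would write the regret in terms of the suboptimal pull counts,
\[
\E R_n = \sum_{i\in A'}\gap{i}\,\E[T_i(n)],
\]
using $R_n=\sum_t\gap{I_t}=\sum_i\gap{i}T_i(n)$ and that $\gap{i}=0$ for the optimal arm, and then bound $\E[T_i(n)]$ for each suboptimal $i$ separately.

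Central to the argument is the good event $G_t\defined\set{\theta^*\in\tilde\Theta_t}$, which holds exactly when every arm's empirical mean lies within its confidence width of the truth. On $G_t$ two facts combine. Since $\theta^*\in\tilde\Theta_t$, the optimistic value of the best arm satisfies $\sup_{\tilde\theta\in\tilde\Theta_t}\mu_{\best}(\tilde\theta)\ge\mu_{\best}(\theta^*)=\mu^*(\theta^*)$, and since arm $i$ is selected only when it is optimistic, $\sup_{\tilde\theta\in\tilde\Theta_t}\mu_i(\tilde\theta)\ge\mu^*(\theta^*)$. On the other hand, every $\tilde\theta\in\tilde\Theta_t$ keeps $\mu_i(\tilde\theta)$ within one confidence width of $\hat\mu_{i,T_i(t-1)}$, which on $G_t$ lies within a further width of $\mu_i(\theta^*)$; hence $\sup_{\tilde\theta\in\tilde\Theta_t}\mu_i(\tilde\theta)\le\mu_i(\theta^*)+2\sqrt{\alpha\sigma^2\log t/T_i(t-1)}$. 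Chaining these two bounds gives $\gap{i}\le 2\sqrt{\alpha\sigma^2\log t/T_i(t-1)}$, so on $G_t$ arm $i$ cannot be selected once $T_i(t-1)$ exceeds the threshold $u_i\defined\ceil{8\alpha\sigma^2\log n/\gap{i}^2}$. This is the step where the known structure and optimism do the work, and it is the cleanest part of the proof.

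I would then split $T_i(n)\le u_i+\sum_{t=1}^n\ind{I_t=i,\,T_i(t-1)>u_i}$ and observe that, by the previous paragraph, the event in the sum is contained in $G_t^c$. Summing over $i\in A'$ weighted by the gaps, the threshold terms contribute $\sum_{i\in A'}8\alpha\sigma^2\log n/\gap{i}$, the rounding from the ceiling contributes $\sum_i\gap{i}$, and the failure terms contribute at most $\maxgap\sum_{t=1}^n\P{G_t^c}$, since at most one arm is pulled per round.

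The main obstacle is bounding $\sum_{t=1}^n\P{G_t^c}$ by $2K(\alpha-1)/(\alpha-2)$. Here I would union over the $K$ arms and control each arm's confidence failure with the stated maximal concentration inequality. The delicate point is that the confidence width depends on the random count $T_j(t-1)$, so a crude union over all possible counts $s\in\set{1,\dots,t-1}$ loses a spurious factor of $t$ and would only give convergence for $\alpha>4$. Absorbing this union with the maximal inequality (or a peeling argument over the count) instead yields a per-round failure probability that decays fast enough for $\sum_t\P{G_t^c}$ to converge precisely when $\alpha>2$, and an integral comparison of the form $\sum_{t\ge1}t^{-(\alpha-1)}\le(\alpha-1)/(\alpha-2)$ produces the stated constant. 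Combining the three contributions then gives the claimed bound.
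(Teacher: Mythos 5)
Your proof skeleton matches the paper's step for step: the decomposition $\E R_n=\sum_{i\in A'}\gap{i}\E T_i(n)$, the optimism-plus-confidence-set chaining showing that on the good event a suboptimal arm cannot be pulled once $T_i(t-1)$ exceeds a threshold $u_i$, the containment of the remaining pulls in the failure event, and the charge $\maxgap\sum_{t=1}^n\P{G_t^c}$ (using that one arm is pulled per round). Up to that point your argument is sound and is exactly the paper's proof of Theorem \ref{thm:ucb}.

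The gap is in the one step you yourself call the main obstacle, bounding $\sum_{t=1}^n\P{G_t^c}$ by $2K(\alpha-1)/(\alpha-2)$, and it is twofold. First, your diagnosis is off: the paper's Lemma \ref{lem:failure} does exactly the ``crude'' union over counts $s\in\set{1,\dots,t}$ that you dismiss. It works for $\alpha>2$ because the failure event is defined with radius $\sqrt{2\alpha\sigma^2\log t/s}$ --- note the factor $2$, which is also what produces the constant $8$ in $u_i$ (your chaining with the un-inflated radius actually gives $4\alpha\sigma^2\log n/\gap{i}^2$). With that radius each fixed count fails with probability $2\exp(-\alpha\log t)=2t^{-\alpha}$, so the union over counts costs a factor $t$ and yields $\P{F_t}\le 2Kt^{1-\alpha}$, whence $\sum_{t\ge 1}t^{1-\alpha}\le(\alpha-1)/(\alpha-2)$. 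Your ``only for $\alpha>4$'' objection applies only to the un-inflated radius $\sqrt{\alpha\sigma^2\log t/s}$ displayed in Algorithm \ref{alg:ucbd} (an internal inconsistency of the paper; its analysis consistently uses the inflated radius). Second, and more seriously, your proposed substitute does not deliver the stated bound. The maximal inequality of Section \ref{sec:notation} applies to a fixed threshold $\epsilon$, whereas here the threshold $\sqrt{\alpha\sigma^2 s\log t}$ grows with $s$, so one must peel over geometric blocks of counts; with peeling ratio $\beta>1$ this gives a per-round bound of order $K(\log_\beta t)\,t^{-\alpha/(2\beta)}$, which indeed converges for every $\alpha>2$ (choose $1<\beta<\alpha/2$) but carries an extra logarithmic factor and a constant that is not $2K(\alpha-1)/(\alpha-2)$. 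Your final sentence, summing $t^{-(\alpha-1)}$, silently assumes the per-round failure probability is $2Kt^{1-\alpha}$ --- which is precisely the conclusion of the union-over-counts lemma you rejected, not something the maximal/peeling route provides. To close the gap, either adopt the factor-$2$ radius and the plain union bound (the paper's route), or carry out the peeling argument honestly and accept a different constant in the first term of the bound.
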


If the samples from the optimal arm are sufficient to learn the optimal action, then finite regret is possible.
In Section \ref{sec:lower} we give something of a converse by showing that if knowing the mean of the optimal arm
is insufficient to act optimally, then logarithmic regret is unavoidable.

\begin{theorem}\label{thm:finite}
Let $\alpha = 4$ and assume
there exists an $\epsilon > 0$ such that
\eqn{
\label{eq:finite:cond}
(\forall \theta \in \Theta) \qquad \left|\mu_\best(\theta^*) - \mu_\best(\theta)\right| < \epsilon \implies 
\forall i \neq \best,
\mu_\best(\theta) > \mu_i(\theta).
}
Then  
$\displaystyle \E R_n\leq \sum_{i \in A'} \left({32\sigma^2 \log \omega^* \over \gap{i}} + \gap{i}\right) + 3\maxgap K + {\maxgap K^3 \over \omega^*}$, \\
with $\displaystyle \omega^* \defined \max\set{\omega\left({8\sigma^2 \alpha K \over \epsilon^2}\right),\;\omega\left({8\sigma^2 \alpha K \over \mingap^2}\right)}$.
\end{theorem}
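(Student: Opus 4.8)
The plan is to bound $\E[T_i(n)]$ for each suboptimal arm $i\in A'$ and assemble $\E R_n=\sum_{i\in A'}\gap{i}\,\E[T_i(n)]$. I work on the clean event $G_t=\{\theta^*\in\tilde\Theta_t\}$ that the confidence set captures the truth; its failures are driven purely by deviations of the empirical means and will be shown to contribute only the additive constants $3\maxgap K+\maxgap K^3/\omega^*$. On $G_t$ the optimistic value of the best arm satisfies $\sup_{\tilde\theta\in\tilde\Theta_t}\mu_\best(\tilde\theta)\ge\mu_\best(\theta^*)=\mu^*(\theta^*)$, so whenever a suboptimal arm $i$ is selected we must have $\sup_{\tilde\theta\in\tilde\Theta_t}\mu_i(\tilde\theta)\ge\mu^*(\theta^*)$. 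Since every $\tilde\theta\in\tilde\Theta_t$ obeys $\mu_i(\tilde\theta)<\hat\mu_{i,T_i(t-1)}+\sqrt{\alpha\sigma^2\log t/T_i(t-1)}$ while, on $G_t$, $\hat\mu_{i,T_i(t-1)}<\mu_i(\theta^*)+\sqrt{\alpha\sigma^2\log t/T_i(t-1)}$, the triangle inequality gives $\gap{i}\le 2\sqrt{\alpha\sigma^2\log t/T_i(t-1)}$; that is, a pull of $i$ at time $t$ forces the exploration cap $T_i(t-1)=O(\sigma^2\log t/\gap{i}^2)$, with the explicit constant $8\alpha$ coming out of the concentration bookkeeping. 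By itself this only yields the logarithmic bound of Theorem~\ref{thm:ucb}.

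The finite bound comes from converting hypothesis \eqref{eq:finite:cond} into a saturation statement for the optimal arm. The same triangle-inequality argument applied to $\best$ shows that once $T_\best(t-1)\ge 4\alpha\sigma^2\log t/\epsilon^2$, every $\tilde\theta\in\tilde\Theta_t$ satisfies $|\mu_\best(\tilde\theta)-\mu_\best(\theta^*)|<\epsilon$, whence \eqref{eq:finite:cond} forces $\best$ to be the unique optimal arm throughout $\tilde\Theta_t$ and no suboptimal arm can be the optimistic choice. Thus, on the clean event, a suboptimal pull at time $t$ requires the best arm to be \emph{under-explored}, $T_\best(t-1)<4\alpha\sigma^2\log t/\epsilon^2$, \emph{and} each suboptimal arm to be under-explored, $T_i(t-1)<4\alpha\sigma^2\log t/\gap{i}^2\le 4\alpha\sigma^2\log t/\mingap^2$.

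The central step is then a self-referential counting argument. Let $\tau$ be the last time a suboptimal arm is played on the clean event. Summing the two under-exploration bounds across all $K$ arms gives $\tau-1=\sum_j T_j(\tau-1)\le 4\alpha\sigma^2 K\log\tau\cdot\max\{\epsilon^{-2},\mingap^{-2}\}+K$, a bound of the form $\tau<x\log\tau$ with $x\le 8\alpha\sigma^2 K\max\{\epsilon^{-2},\mingap^{-2}\}$ after absorbing lower-order slack. Since $\omega$ is increasing and, by its defining property, $z\ge x\log z$ for all $z\ge\omega(x)$, any such $\tau$ must satisfy $\tau<\omega(x)\le\omega^*$. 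Hence every suboptimal pull occurs before time $\omega^*$, and feeding $t<\omega^*$ back into the exploration cap yields $\E[T_i(n)]\le 32\sigma^2\log\omega^*/\gap{i}^2+1$ on the clean event, i.e. the leading term $\sum_{i\in A'}(32\sigma^2\log\omega^*/\gap{i}+\gap{i})$ after multiplying by $\gap{i}$.

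It remains to account for $G_t$ failing: I bound the expected number of rounds in which some $\hat\mu_{i,T_i(t-1)}$ escapes its band using the sub-gaussian maximal inequality, where with $\alpha=4$ the per-round deviation decays like $t^{-2}$ per arm, and couple this with the observation that after time $\omega^*$ a suboptimal pull can only be \emph{triggered} by such a failure; summing yields the finite correction $3\maxgap K+\maxgap K^3/\omega^*$. The hard part will be this counting loop: one must simultaneously control the best arm (which must be pulled enough to saturate $\tilde\Theta_t$) and the suboptimal arms (whose pulls both delay that saturation and are themselves capped), closing the loop through the inverse-logarithm function $\omega$ while tracking constants precisely enough to land the factor $32=8\alpha$. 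Two smaller subtleties need care: the suprema defining the optimistic arm need not be attained, so the inequalities should be read with an arbitrary slack sent to zero; and in the tie case $\sup_{\tilde\theta}\mu_i=\sup_{\tilde\theta}\mu_\best$ one must check the argmax tie-break does not defeat the confinement, which again follows because any such pull still forces $T_i(t-1)$ to be small.
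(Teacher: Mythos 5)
Your proposal has the same skeleton as the paper's proof: the clean-event exploration cap $T_i(t-1)=O(\alpha\sigma^2\log t/\gap{i}^2)$ for any clean suboptimal pull, the saturation lemma (once $T_\best(t-1)\gtrsim \alpha\sigma^2\log t/\epsilon^2$ and $\theta^*\in\tilde\Theta_t$, condition \eqref{eq:finite:cond} forces $I_t=\best$), a pigeonhole count closed through the function $\omega$, and the attribution of the additive constants to confidence failures. The constant bookkeeping ($8\alpha$ vs.\ $4\alpha$) is muddled, but the paper itself is inconsistent between the algorithm's radius and the analysis, so that is forgivable.

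The genuine gap is in your ``central step'' and in the failure accounting, and it is not cosmetic. Your deterministic counting at the last clean suboptimal pull $\tau$ sums the caps $T_j(\tau-1)\lesssim 4\alpha\sigma^2\log\tau\max\set{\epsilon^{-2},\mingap^{-2}}$ over all arms to force $\tau<x\log\tau$ and hence $\tau<\omega^*$. But $T_j(\tau-1)$ counts \emph{all} past pulls of arm $j$, including pulls made at rounds $s<\tau$ where $F_s=1$; on those rounds no cap holds, so the pigeonhole inequality fails on the per-round clean event. You cannot fix this with a global clean event either, since $\sum_t \P{F_t=1}$ is a constant of order $K$ (not $o(1)$), and multiplying by $n\maxgap$ gives linear regret. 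Your proposed repair --- ``after time $\omega^*$ a suboptimal pull can only be triggered by such a failure; summing yields the finite correction'' --- is not a valid one-to-one charge: a clean round $t>\omega^*$ can see a suboptimal pull because \emph{earlier} failure-round pulls inflated some $T_j(t-1)$ and thereby starved $\best$, and a single early failure could a priori be blamed for many such later rounds. The paper closes precisely this hole with Lemma \ref{lem:high-prob}: for $z>u_i(n)$, the event $\set{T_i(n)>z}$ requires arm $i$ to be pulled at some round $s$ with $T_i(s-1)\geq z$ already above the cap, which forces $F_s=1$ at a \emph{late} round $s>z$; hence $\P{T_i(n)>z}\leq \sum_{s>z}\P{F_s=1}\leq 2Kz^{2-\alpha}/(\alpha-2)$. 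Applying this with $z=t/K$, a union bound gives $\P{T_\best(t)<t/K}\leq \tfrac{2K^2}{\alpha-2}\left(K/t\right)^{\alpha-2}$, and summing over $t>\omega^*$ with $\alpha=4$ is exactly what produces the $\maxgap K^3/\omega^*$ term. Without this late-failure argument (or an equivalent), your accounting yields at best a logarithmic, not finite, correction, so the proof as written does not close.
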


\begin{remark}
For small $\epsilon$ and large $n$ the expected regret looks like
$\displaystyle \smash[b]{\E R_n \in O\left(\sum_{i=1}^K {\log \left({1 \over \epsilon}\right) \over \gap{i}}\right)}$ \\ 
(for small $n$ the regret is, of course, even smaller).
\end{remark}

The explanation of the bound is as follows. If at some time-step $t$ it holds that all confidence intervals contain the truth
and the width of the confidence interval about $\best$ drops below $\epsilon$, then
by the condition in Equation (\ref{eq:finite:cond}) it holds that $\best$ is the optimistic arm within $\tilde\Theta_t$. In this
case UCB-S suffers no regret at this time-step. Since the number of samples of each sub-optimal arm grows at most logarithmically 
by the proof of Theorem \ref{thm:ucb}, the 
number of samples of the best arm must grow linearly. Therefore the number of time-steps before best arm has been pulled $O(\epsilon^{-2})$ times
is also $O(\epsilon^{-2})$. After this point the algorithm suffers only a constant cumulative penalty for the possibility that the confidence
intervals do not contain the truth, which is finite for suitably chosen values of $\alpha$. 
Note that Agrawal \etal \cite{ATA89} had essentially the same condition to achieve finite regret as (\ref{eq:finite:cond}), but specified to the case where $\Theta$
is finite.

An interesting question is raised by comparing the bound in Theorem \ref{thm:finite} to those given by Bubeck \etal \cite{BPR13} where if the expected return of the best arm
is known and $\epsilon$ is a known bound on the minimum gap, then a regret bound of 
\eqn{
\label{eq:bubeck}
O\left(\sum_{i \in A'} \left({\log\left({2\gap{i} \over \epsilon}\right) \over \gap{i}}\left(1 + \log\log{1 \over \epsilon}\right) \right)\right)
}
is achieved. If $\epsilon$ is close to $\gap{i}$, then this bound is an improvement over the bound given by Theorem \ref{thm:finite}, although our theorem
is more general.
The improved UCB algorithm \citep{AO10} enjoys a bound on the expected regret of
$O(\sum_{i \in A'} {1 \over \gap{i}} \log {n\gap{i}^2})$.
If we follow the same reasoning as above we obtain a bound comparable to (\ref{eq:bubeck}). Unfortunately though, the extension of
the improved UCB algorithm to the structured setting is rather challenging with the main obstruction being the extreme growth of the phases
used by improved UCB. Refining the phases leads to super-logarithmic regret, a problem we ultimately failed to resolve. 
Nevertheless we feel that there is some hope of obtaining
a bound like (\ref{eq:bubeck}) in this setting.

Before the proofs of Theorems \ref{thm:ucb} and \ref{thm:finite} we give some example structured bandits and indicate the regions where
the conditions for Theorem \ref{thm:finite} are (not) met.
Areas where Theorem \ref{thm:finite} can be applied to obtain finite regret are unshaded while those with logarithmic regret are shaded.

\begin{figure}[H]
\centering
\begin{tikzpicture}[thick,scale=0.5]
\draw[finite] (0,0) rectangle (4,4);
\draw[arm1] (0,0) edge[-] (4,4);
\draw[arm2]  (0,4) edge[-] (4,0);
\drawgraph
\node at (2,4.5) {(a)};
\end{tikzpicture}
\hspace{0.5cm}
\begin{tikzpicture}[thick,scale=0.5]
\draw[finite] (2,0) rectangle (4,4);
\draw[infinite] (0,0) rectangle (2,4);
\draw[arm1] (0,0) edge[-] (4,4);
\draw[arm2]  (0,2) edge[-] (4,2);
\drawsimplegraph
\node at (2,4.5) {(b)};
\end{tikzpicture}
\hspace{0.5cm}
\begin{tikzpicture}[thick,scale=0.5]
\draw[finite] (0,0) rectangle (4,4);
\draw[arm1] (0,2) -- (2,2);
\draw[arm2] (0,4) -- (2,2);
\draw[arm1] (2,2) -- (4,4);
\draw[arm2] (2,2) -- (4,2);
\drawsimplegraph
\node at (2,4.5) {(c)};
\node at (4.7,0) {$\theta$};
\end{tikzpicture}
\hspace{0.5cm}
\begin{tikzpicture}[thick,scale=0.5]
\draw[thin] (-1,-1) rectangle (3,4);
\draw (-0.5,2) edge[arm1] node[right=0.5cm] {$\mu_1$} (1,2);
\draw (-0.5,1) edge[arm2] node[right=0.5cm] {$\mu_2$} (1,1);
\draw (-0.5,0) edge[arm3] node[right=0.5cm] {$\mu_3$} (1,0);
\node at (0,3.3) {{\scriptsize \bf Key:}};
\node at (2,-1.5) {\color{white}a hidden message};
\end{tikzpicture}

\hspace{-1.2cm}
\begin{tikzpicture}[thick,scale=0.5]
\draw[infinite] (0,0) rectangle (3,4);
\draw[finite] (3,0) rectangle (4,4);
\draw[arm1] (0,3) edge (1,3);
\draw[arm1] (1,1) edge (3,1);
\draw[arm1] (3,3) edge (4,4);
\draw[arm2] (0,3) edge (1,2);
\draw[arm2] (1,2) edge (3,2);
\draw[arm2] (3,4) edge (4,4);
\drawgraph
\node at (2,4.5) {(d)};
\end{tikzpicture}
\hspace{0.5cm}
\begin{tikzpicture}[thick,scale=0.5]
\draw[finite] (0,0) rectangle (1,4);
\draw[infinite]   (1,0) rectangle (2,4);
\draw[finite] (2,0) rectangle (4,4);
\draw[infinite] (3,0) -- (3,4);
\draw[arm1] (1,1) edge[-] (4,4);
\draw[arm2]  (1,2) edge[-] (4,2);
\draw[arm1] (0,3) edge[-] (1,3);
\draw[arm2] (0,4) edge[-] (1,4);
\node at (2,4.5) {(e)};
\drawsimplegraph
\end{tikzpicture}
\hspace{0.5cm}
\begin{tikzpicture}[thick,scale=0.5]
\draw[finite] (0,0) rectangle (4,4);
\draw[arm1] (0,1) edge (1.5,1);
\draw[arm2] (0,2) edge (1.5,2);
\draw[arm3] (0.1,3) edge[arm3] (3,3);
\draw[arm2] (1.5,1) edge (4.5,1);
\draw[arm1] (1.5,2) edge (3,2);
\draw[arm3] (3,2) edge[arm3] (4.5,2);
\draw[arm1] (3,3) edge (6,3);
\draw[arm2] (4.5,2) edge (6,2);
\draw[arm3] (4.5,1) edge[arm3] (7.5,1);
\draw[arm2] (6,3) edge (9,3);
\draw[arm1] (6,2) edge (7.5,2);
\draw[arm1] (7.5,1) edge (9,1);
\draw[arm3] (7.5,2) edge[arm3] (9,2);

\draw (0,0) edge[->]  (9.1,0);
\draw (0,0) edge[->]  (0,4.2);
\node at (9.7,0) {$\theta$};
\node at (4.5,4.5) {(f)};

\node[yshift=-0.3cm] at (0,0) {\color{white}$-1$};
\node[yshift=-0.3cm] at (0.75,0) {$1$};
\node[yshift=-0.3cm] at (2.25,0) {$2$};
\node[yshift=-0.3cm] at (3.75,0) {$3$};
\node[yshift=-0.3cm] at (5.25,0) {$4$};
\node[yshift=-0.3cm] at (6.75,0) {$5$};
\node[yshift=-0.3cm] at (8.25,0) {$6$};
\end{tikzpicture}
\caption{Examples}\label{fig:examples}
\end{figure}
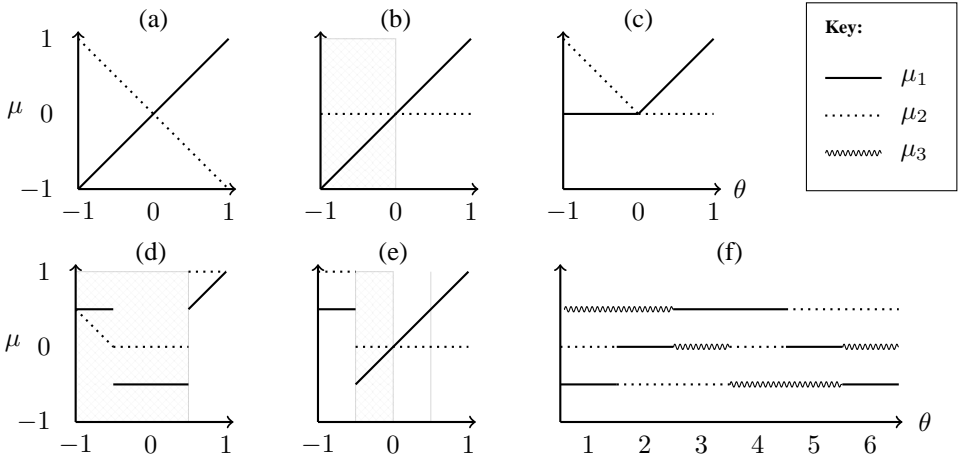

\begin{enumerate}[(a)]
\item The conditions for Theorem \ref{thm:finite} are met for all $\theta \neq 0$, but for $\theta = 0$ the regret strictly vanishes
for all policies, which means that the regret is bounded by 
$\E R_n \in O(\ind{\theta^* \neq 0} {1 \over |\theta^*|} \log{1 \over |\theta^*|})$.
\item Action 2 is uninformative and not globally optimal so Theorem \ref{thm:finite} does not apply for $\theta < 1/2$ where this action is optimal.
For $\theta > 0$ the optimal action is 1, when the conditions are met and finite regret is again achieved.
\eq{
\E R_n \in 
O\left(\ind{\theta^* < 0}{\log n \over |\theta^*|} + \ind{\theta^* > 0}  {\log{1 \over \theta^*} \over \theta^*}\right).
}
\item The conditions for Theorem \ref{thm:finite} are again met for all non-zero $\true$, which leads as in (a) to a regret of
$\E R_n \in O(\ind{\theta^* \neq 0} {1 \over |\theta^*|} \log{1 \over |\theta^*|})$.
\end{enumerate}
Examples (d) and (e) illustrate the potential complexity of the regions in which finite regret is possible. 
Note especially that in (e) the regret for $\true = {1 \over 2}$ is logarithmic in the horizon, but finite for $\true$ arbitrarily close.
Example (f) is a permutation bandit with 3 arms
where it can be clearly seen that the conditions of Theorem \ref{thm:finite} are satisfied. 

\section{Proof of Theorems \ref{thm:ucb} and \ref{thm:finite}}\label{sec:ucb}

We start by bounding the probability that some mean does not lie inside the confidence set.
\begin{lemma}\label{lem:failure}
$\P{F_t = 1} \leq 2Kt \exp(-\alpha \log(t))$ where
\eq{
F_t = \ind{\exists i : |\hat \mu_{i,T_i(t-1)} - \mu_i| \geq \sqrt{2\alpha \sigma^2 \log t \over T_i(t-1)}}.
}
\end{lemma}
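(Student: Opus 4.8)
The plan is to reduce the lemma to the fixed-sample concentration corollary stated above, the only wrinkle being that the number of samples $T_i(t-1)$ appearing in $F_t$ is itself random, so the inequality $\P{|\hat\mu_{i,n}-\mu_i|\geq\epsilon}\leq 2\exp(-\epsilon^2 n/(2\sigma^2))$ cannot be applied directly with $n=T_i(t-1)$. First I would split the failure event over the $K$ arms by a union bound, so that it suffices to control, for each fixed $i$, the probability that $\hat\mu_{i,T_i(t-1)}$ deviates from $\mu_i$ by at least $\sqrt{2\alpha\sigma^2\log t / T_i(t-1)}$.

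To handle the random index I would pass to a union over all the values $T_i(t-1)$ can take. Since $T_i(t-1)\in\set{0,1,\dots,t-1}$ and the deviation threshold is infinite (hence the event impossible) when $T_i(t-1)=0$, we have the inclusion
\eq{
\set{\left|\hat\mu_{i,T_i(t-1)}-\mu_i\right|\geq\sqrt{2\alpha\sigma^2\log t\over T_i(t-1)}} \subseteq \bigcup_{s=1}^{t-1}\set{\left|\hat\mu_{i,s}-\mu_i\right|\geq\sqrt{2\alpha\sigma^2\log t\over s}},
}
where $\hat\mu_{i,s}$ now denotes the empirical mean of the first $s$ independent samples from arm $i$, a quantity with a deterministic sample count to which the corollary applies.

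The key computation is the choice of threshold: applying the corollary with $\epsilon=\sqrt{2\alpha\sigma^2\log t / s}$ makes the exponent collapse, since $\epsilon^2 s/(2\sigma^2)=\alpha\log t$ exactly, giving $\P{|\hat\mu_{i,s}-\mu_i|\geq\epsilon}\leq 2\exp(-\alpha\log t)$ independently of $s$. Summing this uniform bound over the at most $t-1<t$ values of $s$, and then over the $K$ arms, yields $\P{F_t=1}\leq 2Kt\exp(-\alpha\log t)$, as claimed.

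The argument is elementary once the reduction is set up; the only point requiring care — and the thing a naive application of the concentration inequality would get wrong — is the handling of the random sample count $T_i(t-1)$, which forces the extra union over $s$ and is precisely the source of the linear factor $t$ (rather than a constant) in the bound. I would also note that the plain fixed-$n$ corollary is enough here, so the maximal inequality stated in Section \ref{sec:notation} is not actually needed for this particular lemma.
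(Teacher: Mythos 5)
Your proposal is correct and follows essentially the same route as the paper's proof: a union bound over the $K$ arms, then the standard union over the possible values of the random count $T_i(t-1)$, and finally the fixed-sample sub-gaussian bound with $\epsilon = \sqrt{2\alpha\sigma^2\log t/s}$ so that the exponent collapses to $\alpha\log t$ uniformly in $s$. The only cosmetic difference is that you sum over $s \leq t-1$ (explicitly discarding $T_i(t-1)=0$) where the paper sums over $s \leq t$; both give the stated bound $2Kt\exp(-\alpha\log t)$.
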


\begin{proof}
We use the concentration guarantees:
\eq{
&\P{F_t = 1} 
\sr{(a)}= \P{\exists i : \left|\mu_i(\theta^*) - \hat \mu_{i,T_i(t-1)}\right| \geq \sqrt{2\alpha \sigma^2 \log t\over T_i(t - 1)}} \\
&\sr{(b)}\leq \sum_{i=1}^K \P{\left|\mu_i(\theta^*) - \hat \mu_{i,T_i(t-1)}\right| \geq \sqrt{2\alpha\sigma^2 \log t\over T_i(t - 1)}} \\
&\sr{(c)}\leq\sum_{i=1}^K \sum_{s=1}^t \P{\left|\mu_i(\theta^*) - \hat \mu_{i,s}\right| \geq \sqrt{2\alpha \sigma^2 \log t \over s}}
\sr{(d)}\leq\sum_{i=1}^K \sum_{s=1}^t 2 \exp(-\alpha \log t) 
\sr{(e)}= 2Kt^{1-\alpha}
}
where (a) follows from the definition of $F_t$.
(b) by the union bound.
(c) also follows from the union bound and is the standard trick to deal with the random variable $T_i(t-1)$. 
(d) follows from the concentration inequalities for sub-gaussian random variables.
(e) is trivial.
\end{proof}

\begin{proof}[Proof of Theorem \ref{thm:ucb}]
Let $i$ be an arm with $\gap{i} > 0$ and suppose that $I_t = i$. Then either $F_t$ is true or
\eqn{
\label{eq:ucb:1} T_i(t-1) &< \ceil{{8\sigma^2 \alpha \log n \over \gap{i}^2}} \rdefined u_i(n) 
}
Note that if $F_t$ does not hold then the true parameter lies within the confidence set, $\true \in \tilde \Theta_t$.
Suppose on the contrary that $F_t$ and (\ref{eq:ucb:1}) are both false. 
\eq{
\max_{\tilde\theta \in \tilde\Theta_t} \mu_{\best}(\tilde\theta)  
&\sr{(a)}\geq \mu^*(\true) 
\sr{(b)}=\mu_i(\true) + \gap{i} 
\sr{(c)}> \gap{i} + \hat \mu_{i,T_i(t-1)} - \sqrt{2\sigma^2 \alpha \log t \over T_i(t-1)}  \\
&\sr{(d)}\geq \hat \mu_{i,T_i(t-1)} + \sqrt{2\alpha\sigma^2 \log t \over T_i(t-1)} 
\sr{(e)}\geq \max_{\tilde\theta \in \tilde\Theta_t} \mu_i(\tilde\theta),
}
where (a) follows since $\theta^* \in \tilde\Theta_t$.
(b) is the definition of the gap.
(c) since $F_t$ is false.
(d) is true because (\ref{eq:ucb:1}) is false.
Therefore arm $i$ is not taken. We now bound the expected number of times that arm $i$ is played within the first $n$ time-steps by
\eq{
\E T_i(n) 
&\sr{(a)}= \E \sum_{t=1}^n \ind{I_t = i}  
\sr{(b)}\leq u_i(n) + \E \sum_{t=u_i+1}^n \ind{I_t = i \wedge (\ref{eq:ucb:1})\text{ is false}} \\
&\sr{(c)}\leq u_i(n) + \E \sum_{t=u_i+1}^n \ind{F_t = 1 \wedge I_t = i}
}
where (a) follows from the linearity of expectation and definition of $T_i(n)$. 
(b) by Equation (\ref{eq:ucb:1}) and the definition of $u_i(n)$ and expectation. 
(c) is true by recalling that playing arm $i$ at time-step $t$ implies that either $F_t$ or (\ref{eq:ucb:1}) must be true.
Therefore
\eqn{
\E R_n 
&\leq \sum_{i \in A'} \gap{i} \left(u_i(n) + \E \sum_{t=u_i+1}^n \ind{F_t = 1 \wedge I_t = i}\right) 
\label{eq:ucb:2} \leq \sum_{i \in A'} \gap{i} u_i(n) + \maxgap \E \sum_{t=1}^n \ind{F_t = 1}
}
Bounding the second summation
\eq{
\E \sum_{t=1}^n \ind{F_t = 1} 
\sr{(a)}= \sum_{t=1}^n \P{F_t = 1} 
\sr{(b)}\leq \sum_{t=1}^n 2Kt^{1-\alpha} 
\sr{(c)}\leq {2K(\alpha - 1) \over \alpha - 2}
}
where (a) follows by exchanging the expectation and sum and because the expectation of an indicator function can be written as the probability of the event.
(b) by Lemma \ref{lem:failure} and (c) is trivial.
Substituting into (\ref{eq:ucb:2}) leads to
\eq{
\E R_n \leq {2\maxgap K(\alpha - 1) \over \alpha - 2} + \sum_{i \in A'} {8\alpha\sigma^2 \log n \over \gap{i}} + \sum_i \gap{i}.
}
\end{proof}


Before the proof of Theorem \ref{thm:finite} we need a high-probability bound on the number of times arm $i$ is pulled, which is
proven along the lines of similar results by \cite{AMS07}.

\begin{lemma}\label{lem:high-prob}
Let $i \in A'$ be some sub-optimal arm.
If $z > u_i(n)$, then
$\displaystyle \P{T_i(n) > z} \leq {2K z^{2-\alpha}\over \alpha - 2}$.
\end{lemma}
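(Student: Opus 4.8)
The plan is to reduce the event $\{T_i(n) > z\}$ to the occurrence of a confidence failure at a \emph{late} time-step, and then sum the failure probabilities supplied by Lemma~\ref{lem:failure}. The pivotal observation, already extracted in the proof of Theorem~\ref{thm:ucb}, is that a sub-optimal arm $i$ can only be selected at time $t \le n$ if either $F_t = 1$ or inequality~(\ref{eq:ucb:1}) holds, i.e.\ $T_i(t-1) < u_i(n)$. Equivalently (contrapositively), once arm $i$ has already been sampled at least $u_i(n)$ times, any further pull of it forces $F_t = 1$; this uses $\log t \le \log n$ so that $T_i(t-1) \ge u_i(n)$ makes the gap exceed twice the confidence width.

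First I would condition on the time-step at which the critical pull occurs. Since $z > u_i(n)$ and $u_i(n) \in \N$ we have $\floor z \ge u_i(n)$, so $\{T_i(n) > z\}$ coincides with $\{T_i(n) \ge \floor z + 1\}$, which is the event that arm $i$ is pulled for the $(\floor z + 1)$-th time at some step $t^\ast \le n$. At that step $T_i(t^\ast - 1) = \floor z \ge u_i(n)$, so~(\ref{eq:ucb:1}) fails and the observation above gives $F_{t^\ast} = 1$. Moreover at least $\floor z + 1$ time-steps are needed to sample arm $i$ that many times, so $t^\ast \ge \floor z + 1 > z$.

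Decomposing over the value of $t^\ast$ — the events $\{I_t = i,\ T_i(t-1) = \floor z\}$ are disjoint across $t$, since each one pins down the step of the $(\floor z+1)$-th pull — I would then write
\eq{
\P{T_i(n) > z} &= \sum_{t=\floor z + 1}^n \P{I_t = i,\ T_i(t-1) = \floor z} \\
&\le \sum_{t > z} \P{F_t = 1} \le \sum_{t > z} 2Kt^{1-\alpha},
}
invoking $F_{t^\ast} = 1$ on each summand and then Lemma~\ref{lem:failure}. The final step bounds the tail sum by the integral $\int_z^\infty 2Kx^{1-\alpha}\,dx = 2Kz^{2-\alpha}/(\alpha-2)$, which converges precisely because $\alpha > 2$.

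The main obstacle is not the concentration estimate — that is already packaged in Lemma~\ref{lem:failure} — but the clean handling of the adaptive, random sample counts $T_i(t-1)$. The argument sidesteps this by anchoring everything to the single random step $t^\ast$ of the critical pull, which simultaneously certifies both that a failure event occurred and that it occurred after time $z$; the disjointness of the conditioning events is what keeps the resulting bound free of any dependence on the horizon $n$. The only genuinely delicate point is the sum-to-integral comparison, where one must be slightly careful with the rounding of $z$ in order to land on $z^{2-\alpha}$ rather than $\floor z^{2-\alpha}$, but this is a routine estimate that does not affect the form of the bound.
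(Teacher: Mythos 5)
Your proposal is correct and follows essentially the same route as the paper: both arguments observe (via the proof of Theorem~\ref{thm:ucb}) that any pull of arm $i$ after $u_i(n)$ prior pulls forces $F_t = 1$ at a time $t > z$, then bound $\P{T_i(n) > z}$ by $\sum_{t > z}\P{F_t = 1}$ using Lemma~\ref{lem:failure} and a sum-to-integral comparison. Your anchoring on the disjoint events pinning down the $(\floor{z}+1)$-th pull is a slightly more careful bookkeeping of what the paper does with a plain union bound (and your flagged rounding caveat is glossed over in the paper as well), but the substance is identical.
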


\begin{proof}
As in the proof of Theorem \ref{thm:ucb}, if $t \leq n$ and $F_t$ is false and $T_i(t-1) > u_i(n) \geq u_i(t)$, then arm $i$ is not chosen.
Therefore
\eq{
\P{T_i(n) > z} 
\leq \sum_{t=z+1}^n \P{F_t = 1} 
\sr{(a)}\leq \sum_{t=z+1}^n 2K t^{1-\alpha} 
\sr{(b)}\leq 2K \int^n_{z}  t^{1 - \alpha} dt 
\sr{(c)}\leq {2K z^{2-\alpha} \over \alpha - 2}  
}
where (a) follows from Lemma \ref{lem:failure} and (b) and (c) are trivial.
\end{proof}

\begin{lemma}
Assume the conditions of Theorem \ref{thm:finite} and additionally that
$T_\best(t - 1) \geq \ceil{8\alpha\sigma^2 \log t \over \epsilon^2}$ and $F_t$ is false. Then $I_t = \best$.
\end{lemma}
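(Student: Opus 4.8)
The plan is to show that under these hypotheses the arm $\best$ is optimal for \emph{every} parameter surviving in the current confidence set, so that the optimistic selection rule is forced to pick it. First I would record what the two extra assumptions buy. Because $F_t$ is false, the true parameter lies in the confidence set, $\theta^* \in \tilde\Theta_t$ (exactly the observation already used in the proof of Theorem~\ref{thm:ucb}); in particular $\tilde\Theta_t \neq \emptyset$, so the algorithm is in its optimistic branch, and $|\mu_\best(\theta^*) - \hat\mu_{\best,T_\best(t-1)}| < \sqrt{2\alpha\sigma^2\log t/T_\best(t-1)}$. The assumption $T_\best(t-1) \geq \ceil{8\alpha\sigma^2\log t/\epsilon^2}$ then controls the width of $\best$'s interval: substituting this lower bound into the radius gives $\sqrt{2\alpha\sigma^2\log t/T_\best(t-1)} \leq \epsilon/2$.

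Next I would propagate this width bound out to every plausible parameter. Fix an arbitrary $\tilde\theta \in \tilde\Theta_t$. The defining inequality of $\tilde\Theta_t$ gives $|\mu_\best(\tilde\theta) - \hat\mu_{\best,T_\best(t-1)}| < \epsilon/2$, and combining this with the analogous bound for $\theta^*$ through the triangle inequality yields $|\mu_\best(\theta^*) - \mu_\best(\tilde\theta)| < \epsilon$. This is precisely the antecedent of the structural condition (\ref{eq:finite:cond}), so the condition lets me conclude $\mu_\best(\tilde\theta) > \mu_i(\tilde\theta)$ for every $i \neq \best$. As $\tilde\theta$ ranged over all of $\tilde\Theta_t$, arm $\best$ strictly dominates every other arm under every parameter still consistent with the data.

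Finally I would convert this pointwise dominance into the optimistic choice. For any $i \neq \best$ and any $\tilde\theta \in \tilde\Theta_t$ we have $\mu_i(\tilde\theta) < \mu_\best(\tilde\theta) \leq \sup_{\tilde\theta' \in \tilde\Theta_t}\mu_\best(\tilde\theta')$; taking the supremum over $\tilde\theta$ gives $\sup_{\tilde\theta \in \tilde\Theta_t}\mu_i(\tilde\theta) \leq \sup_{\tilde\theta \in \tilde\Theta_t}\mu_\best(\tilde\theta)$, so $\best$ attains the largest optimistic value and the algorithm sets $I_t = \best$.

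The step I expect to be most delicate is this last one. Because $\Theta$ carries no topology and the $\mu_k$ need not be continuous, the suprema defining the optimistic values may fail to be attained, so the pointwise strict dominance $\mu_\best(\tilde\theta) > \mu_i(\tilde\theta)$ degrades to the merely non-strict comparison $\sup_{\tilde\theta}\mu_\best \geq \sup_{\tilde\theta}\mu_i$ between optimistic values. Concluding $I_t = \best$ therefore relies on reading the $\argmax$ in the algorithm as breaking ties in favour of $\best$ (equivalently, the harmless convention that the optimal arm is preferred whenever its optimistic value is not strictly exceeded). By contrast the earlier steps are routine: the width estimate is pure arithmetic, and the triangle-inequality step is immediate once $\theta^* \in \tilde\Theta_t$ has been invoked.
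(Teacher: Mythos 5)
Your proof is correct and follows essentially the same route as the paper's: $F_t$ false puts $\theta^*$ in $\tilde\Theta_t$ (so it is nonempty), the sample-count hypothesis bounds each confidence radius for arm $\best$ by $\epsilon/2$, the triangle inequality then makes condition (\ref{eq:finite:cond}) applicable to every $\tilde\theta \in \tilde\Theta_t$, and the resulting pointwise dominance forces the optimistic choice $I_t = \best$. The tie-breaking subtlety you flag (strict pointwise dominance only yields non-strict comparison of suprema when the suprema need not be attained) is a genuine observation, but the paper's own proof passes over it silently as well, so your treatment is a refinement of the same argument rather than a different one.
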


\begin{proof}
Since $F_t$ is false, for $\tilde \theta \in \tilde\Theta_t$ we have:
\eq{
|\mu_\best(\tilde\theta) - \mu_\best(\theta^*)| 
&\sr{(a)}\leq |\mu_\best(\tilde\theta) - \hat\mu_{\best,T_i(t-1)}| + |\hat\mu_{\best,T_i(t-1)} - \mu_\best(\theta^*)| 
\sr{(b)}< 2\sqrt{2\sigma^2 \alpha \log t \over T_\best(t-1)} 
\sr{(c)}\leq \epsilon
}
where (a) is the triangle inequality. (b) follows by the definition of the confidence interval and because $F_t$ is false.
(c) by the assumed lower bound on $T_\best(t-1)$.
Therefore by (\ref{eq:finite:cond}), for all $\tilde\theta \in \tilde\Theta_t$ it holds 
that the best arm is $\best$. Finally, since $F_t$ is false, $\theta^* \in \tilde \Theta_t$, which means that $\tilde\Theta_t \neq \emptyset$.
Therefore $I_t = \best$ as required.
\end{proof}

\begin{proof}[Proof of Theorem \ref{thm:finite}]
Let $\omega^*$ be some constant to be chosen later. Then the regret may be written as
\eqn{
\label{eq:ucb:4}
\E R_n \leq \E \sum_{t=1}^{\omega^*} \sum_{i=1}^K \gap{i} \ind{I_t = i} + \maxgap \E \sum_{t=\omega^*+1}^n \ind{I_t \neq \best}.
}
The first summation is bounded as in the proof of Theorem \ref{thm:ucb} by
\eqn{
\label{eq:ucb:5}
\E \sum_{t=1}^{\omega^*} \sum_{i \in A} \gap{i} \ind{I_t = i} 
&\leq \sum_{i \in A'} \left(\gap{i} + {8\alpha\sigma^2 \log \omega^* \over \gap{i}} \right)
+ \sum_{t=1}^{\omega^*} \P{F_t = 1}.
}
We now bound the second sum in (\ref{eq:ucb:4}) and choose $\omega^*$.
By Lemma \ref{lem:high-prob}, if ${n \over K} > u_i(n)$, then
\eqn{
\label{eq:ucb:3}
\P{T_i(n) > {n \over K}} \leq {2K \over \alpha - 2} \left({K \over n}\right)^{\alpha - 2}.
}
Suppose 
$t \geq \omega^* \defined \max\set{\omega\left({8\sigma^2 \alpha K \over \epsilon^2}\right),\;\omega\left({8\sigma^2 \alpha K \over \mingap^2}\right)}$.
Then ${t\over K} > u_i(t)$ for all $i \neq i^*$ and ${t \over K} \geq {8\sigma^2 \alpha \log t \over \epsilon^2}$. By the union bound
\eqn{
\P{T_\best(t) < {8\sigma ^2\alpha \log t \over \epsilon^2}} 
&\sr{(a)}\leq \P{T_\best(t) < {t \over K}} 
\sr{(b)}\leq \P{\exists i : T_i(t) > {t \over K}} 
\label{eq:ucb:7} \sr{(c)}< {2K^2 \over \alpha - 2} \left({K \over t}\right)^{\alpha - 2}
}
where (a) is true since ${t \over K} \geq {8\sigma^2\alpha \log t \over \epsilon^2}$.
(b) since $\sum_{i=1}^K T_i(t) = t$.
(c) by the union bound and (\ref{eq:ucb:3}).
Now if $T_i(t) \geq {8\sigma^2\alpha \log t \over \epsilon^2}$ and $F_t$ is false, then the chosen arm is $\best$.
Therefore
\eqn{
\nonumber \E \sum_{t=\omega^*+1}^n \ind{I_t \neq i^*}
&\leq \sum_{t=\omega^*+1}^n \P{F_t = 1} + \sum_{t=\omega^*+1}^n \P{T_i(t-1) < {8\sigma^2 \alpha \log t \over \epsilon^2}} \\
\nonumber &\sr{(a)}\leq \sum_{t=\omega^*+1}^n \P{F_t = 1} + {2K^2 \over \alpha - 2} \sum_{t=\omega^*+1}^n \left({K \over t}\right)^{\alpha - 2} \\
\label{eq:ucb:6} &\sr{(b)}\leq \sum_{t=\omega^*+1}^n \P{F_t = 1} + {2K^2 \over (\alpha - 2)(\alpha - 3)}\left({K \over \omega^*}\right)^{\alpha-3} 
}
where (a) follows from (\ref{eq:ucb:7}) and (b) by straight-forward calculus.
Therefore by combining (\ref{eq:ucb:4}), (\ref{eq:ucb:5}) and (\ref{eq:ucb:6}) we obtain
\eq{
\E R_n 
&\leq \sum_{i: \gap{i} > 0} \gap{i} \ceil{8\sigma^2 \alpha \log \omega^* \over \gap{i}^2} + {2\maxgap K^2 \over (\alpha - 2)(\alpha - 3)}\left({K\over \omega^*}\right)^{\alpha-3}
+ \maxgap \sum_{t=1}^n \P{F_t = 1} \\
&\leq \sum_{i: \gap{i} > 0} \gap{i} \ceil{8\sigma^2 \alpha \log \omega^* \over \gap{i}^2} + {2\maxgap K^2 \over (\alpha - 2)(\alpha - 3)}\left({K\over \omega^*}\right)^{\alpha-3}
+ {2\maxgap K(\alpha - 1) \over \alpha - 2}
}
Setting $\alpha = 4$ leads to
$\displaystyle \E R_n\leq \sum_{i=1}^K \left({32\sigma^2 \log \omega^* \over \gap{i}} + \gap{i}\right) + 3\maxgap K + {\maxgap K^3 \over \omega^*}$.
\end{proof}

\section{Lower Bounds and Ambiguous Examples}\label{sec:lower}

We prove lower bounds for two illustrative examples of structured bandits.
Some previous work is also relevant. 
The famous paper by Lai and Robbins \cite{LR85} shows that the bound of Theorem \ref{thm:ucb} cannot
in general be greatly improved. Many of the techniques here are borrowed from Bubeck \etal \cite{BPR13}.
Given a fixed algorithm and varying $\theta$ we denote the regret and expectation by $R_n(\theta)$ and $\E_\theta$ respectively.
Returns are assumed to be sampled from a normal distribution with unit variance, so that $\sigma^2 = 1$. \ifsup \else The proofs of the following theorems
may be found in the supplementary material. \fi

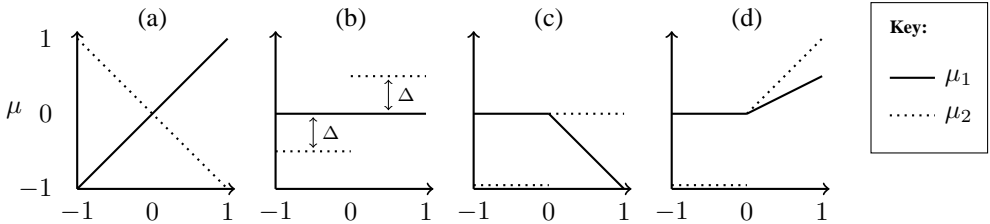
\begin{figure}[H]
\centering
\begin{tikzpicture}[thick,scale=0.5]
\draw[arm1] (0,0) -- (4,4);
\draw[arm2] (0,4) -- (4,0);
\node at (2,4.5) {(a)};
\drawgraph{}
\end{tikzpicture}
\begin{tikzpicture}[thick,scale=0.5]
\draw[arm1] (0,2) edge[-] (4,2);
\draw[arm2]  (0,1) edge[-] (2,1);
\draw[arm2]  (2,3) edge[-] (4,3);
\draw[very thin] (1,1.1) edge[<->] node[right] {\scriptsize $\Delta$} (1,1.9);
\draw[very thin] (3,2.1) edge[<->] node[right] {\scriptsize $\Delta$} (3,2.9);
\drawsimplegraph
\node at (2,4.5) {(b)};
\end{tikzpicture}
\begin{tikzpicture}[thick,scale=0.5]
\draw[arm1] (0,2) edge[-] (2,2);
\draw[arm1] (2,2) edge[-] (4,0);
\draw[arm2]  (0,0.1) edge[-] (2,0.1);
\draw[arm2]  (2,2) edge[-] (4,2);
\node at (2,4.5) {(c)};
\drawsimplegraph
\end{tikzpicture}
\begin{tikzpicture}[thick,scale=0.5]
\draw[arm1] (0,2) edge[-] (2,2);
\draw[arm1] (2,2) edge[-] (4,3);
\draw[arm2]  (0,0.1) edge[-] (2,0.1);
\draw[arm2]  (2,2) edge[-] (4,4);
\node at (2,4.5) {(d)};
\drawsimplegraph
\end{tikzpicture}
\begin{tikzpicture}[thick,scale=0.5]
\draw[thin] (-1,0) rectangle (2.1,4);
\draw (-0.5,2) edge[arm1] node[right=0.3cm] {$\mu_1$} (0.7,2);
\draw (-0.5,1) edge[arm2] node[right=0.3cm] {$\mu_2$} (0.7,1);
\node at (0,3.3) {{\scriptsize \bf Key:}};
\node at (1,-1.5) {\color{white}a hidden message};
\end{tikzpicture}

\caption{Counter-examples}\label{fig:ce}
\end{figure}

\begin{theorem}
Given the structured bandit depicted in Figure \ref{fig:ce}.(a) or Figure \ref{fig:examples}.(c), then for all $\theta > 0$ and all algorithms the
regret satisfies
$\max\set{\E_{-\theta} R_n(-\theta),\; \E_{\theta} R_n(\theta)} \geq {1 \over 8\theta}$
for sufficiently large $n$.
\end{theorem}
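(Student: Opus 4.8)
The plan is to run the standard two-point (change-of-measure) argument between the instances $\theta$ and $-\theta$, but to organise it around the difference of expected pull counts rather than a crude majority event, since this is what produces the clean constant $1/8$. First I would record the structure common to Figure~\ref{fig:ce}.(a) and Figure~\ref{fig:examples}.(c): in each the two arms exchange optimality at $0$, the sub-optimal arm has gap $\Delta = c\theta$ (with $c = 2$ in (a) and $c = 1$ in (c)), and the two reward laws are unit-variance Gaussians whose means differ by exactly $\Delta$, so the per-pull Kullback--Leibler divergence is $\Delta^2/2$. Writing $T_1(n), T_2(n)$ for the pull counts up to time $n$, the regret identities are $\E_\theta R_n(\theta) = \Delta\, \E_\theta[T_2(n)]$ and $\E_{-\theta} R_n(-\theta) = \Delta\, \E_{-\theta}[T_1(n)] = \Delta\,(n - \E_{-\theta}[T_2(n)])$.

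Next I would argue by contradiction, assuming both expected regrets are strictly below $1/(8\theta)$. The identities then force $\E_\theta[T_2(n)]$ to be small and $\E_{-\theta}[T_2(n)]$ to be within $1/(8\theta\Delta)$ of $n$, so that $\E_{-\theta}[T_2(n)] - \E_\theta[T_2(n)] > n - 1/(4c\theta^2)$. On the other hand, since $T_2(n)$ takes values in $[0,n]$, the difference of its expectations under the two measures is at most $n\,\|\mathbb{P}_\theta - \mathbb{P}_{-\theta}\|_{\mathrm{TV}}$; by Pinsker's inequality together with the divergence decomposition (the chain rule for the observation process, which turns the adaptive, random pull counts into $\KL(\mathbb{P}_\theta \Vert \mathbb{P}_{-\theta}) = n\Delta^2/2$) this is at most $n\sqrt{n\Delta^2/4} = \tfrac12 \Delta\, n^{3/2}$. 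Choosing the comparison horizon $n_0 \asymp 1/\theta^2$ (for instance $n_0 = \lceil 1/(2\theta^2)\rceil$ in case (a)) makes the lower estimate $n_0 - 1/(4c\theta^2)$ strictly exceed the upper estimate $\tfrac12 \Delta\, n_0^{3/2}$, a contradiction; hence $\max\{\E_\theta R_{n_0}(\theta),\, \E_{-\theta} R_{n_0}(-\theta)\} \geq 1/(8\theta)$.

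Finally, to promote the bound from the single horizon $n_0$ to all sufficiently large $n$, I would invoke monotonicity of cumulative regret: $R_n = \sum_{t \leq n} \gap{I_t}$ is non-decreasing in $n$ pointwise because each term is non-negative, so each of $\E_\theta R_n(\theta)$ and $\E_{-\theta} R_n(-\theta)$, and therefore their maximum, is non-decreasing in $n$. The estimate at $n_0$ then propagates to every $n \geq n_0$, which is exactly the ``for sufficiently large $n$'' in the statement.

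I expect the main obstacle to be conceptual rather than computational. Because here \emph{both} arms carry information about the sign of $\theta$ (unlike the classical setting, where the reference arm is uninformative), the total divergence $\KL(\mathbb{P}_\theta \Vert \mathbb{P}_{-\theta})$ grows linearly in $n$, so the two-point bound is only non-trivial in the narrow window $n \asymp 1/\theta^2$ and decays for larger horizons. The two ideas that rescue the argument are measuring regret through the \emph{difference} of expected pull counts, which is tight enough to yield the stated constant where a majority-vote event would lose a factor of $e$, and exploiting monotonicity of cumulative regret to carry the sweet-spot estimate forward. The remaining technical care is the usual justification of the divergence decomposition for an adaptively chosen sequence of arms, as in \cite{BPR13}.
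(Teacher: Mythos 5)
Your proposal is correct in spirit and takes a genuinely different route from the paper, but as written it has a gap at the large-$\theta$ end of the claim. First the comparison: the paper (following Theorem 5 of \cite{BPR13}) never isolates a single horizon. It writes the sum of the two regrets as $2\theta\sum_{t=1}^n\bigl(\mathbb P_{-\theta,t}\set{I_t=1}+\mathbb P_{\theta,t}\set{I_t=2}\bigr)$, lower-bounds each summand by $\tfrac12\exp\left(-\KL(\mathbb P_{-\theta,t},\mathbb P_{\theta,t})\right)$ via Lemma 4 of \cite{BPR13} (a Bretagnolle--Huber-type inequality), and uses the linear growth of the KL in $t$ to sum the resulting geometric-like series, which converges to a quantity of order $1/\theta^2$; ``sufficiently large $n$'' is simply what the partial sums need in order to accumulate. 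You instead fix one horizon $n_0\asymp 1/\theta^2$, convert regret to expected pull counts, bound $\E_{-\theta}T_2(n_0)-\E_{\theta}T_2(n_0)$ by $n_0$ times total variation, apply Pinsker plus the divergence decomposition (your identity $\KL=n\Delta^2/2$ is exact here, since both arms are equally informative), and close by contradiction; the monotonicity step extending the bound from $n_0$ to all $n\ge n_0$ is valid because the algorithm is a fixed anytime policy. In the regime where it applies, this is a clean and more elementary argument.

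The gap: the theorem asserts the bound for \emph{all} $\theta>0$, i.e.\ all $\theta\in(0,1]$, and the Pinsker argument cannot reach $\theta$ near $1$ in example (a). Writing $x=n_0\theta^2$, your contradiction in case (a) requires $x-x^{3/2}>1/8$, which holds exactly for $x\in(1/4,\,\approx 0.65)$. Your concrete choice $n_0=\lceil 1/(2\theta^2)\rceil$ already exits this window at $\theta=0.6$ (the ceiling gives $x=0.72$), and once $\theta^2>0.65$ (i.e.\ $\theta\gtrsim 0.81$) \emph{no} integer $n_0$ lies in the window, so no choice of horizon produces a contradiction. This is the intrinsic cost of Pinsker: it is polynomial rather than exponential in the KL, hence vacuous once $n\Delta^2$ is of constant size, whereas the paper's exponential inequality degrades gracefully and its first few terms alone already handle large $\theta$. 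The repair is one line: if $\theta\ge 1/\sqrt 8$, then since no data is observed before the first action, the law of $I_1$ is identical under $\pm\theta$, so $\E_{-\theta}R_n(-\theta)+\E_{\theta}R_n(\theta)\ge 2\theta\bigl(\mathbb P_{-\theta}\set{I_1=1}+\mathbb P_{\theta}\set{I_1=2}\bigr)=2\theta$, whence the maximum is at least $\theta\ge 1/(8\theta)$. Case (c) needs no such patch: there the window is $x-x^{3/2}/2>1/4$, roughly $x\in(0.37,3.4)$, which your choice of $n_0$ hits for every $\theta\le 1$. With this boundary case added, your proof is complete.
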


\newcommand{\Po}[2]{\mathbb P_{#1}\left\{#2\right\}}
\ifsup
\begin{proof}
The proof uses the same technique as the proof of Theorem 5 in the paper by \cite{BPR13}.
Fix an algorithm and 
let $\mathbb P_{\theta,t}$ be the probability measure on the space of outcomes up to time-step $t$ under the 
bandit determined by parameter $\theta$.
\eq{
\E_{-\theta} R_n(-\theta) &+ \E_{\theta} R_n(\theta) 
\sr{(a)}= 2\theta \left(\E_{-\theta} \sum_{t=1}^n \ind{I_t = 1} + \E_{\theta} \sum_{t=1}^n \ind{I_t = 2}\right) \\
&\sr{(b)}= 2\theta \sum_{t=1}^n \left(\Po{-\theta,t}{I_t = 1} + \Po{\theta,t}{I_t = 2}\right) 
\sr{(c)}\geq \theta \sum_{t=1}^n \exp\left(-\KL(\mathbb P_{-\theta,t}, \mathbb P_{\theta,t})\right) \\ 
&\sr{(d)}= \theta \sum_{t=1}^n \exp\left(-4t\theta^2\right) 
\sr{(e)}\geq {1 \over 8\theta} 
}
where (a) follows since $2|\theta|$ is the gap between the expected returns of the two arms given parameter $\theta$ and by
the definition of the regret.
(b) by replacing the expectations with probabilities.
(c) follows from Lemma 4 by \cite{BPR13} where $\KL(\mathbb P_{-\theta,t},\mathbb P_{\theta,t})$ is the relative entropy between measures $\mathbb P_{-\theta,t}$ and $\mathbb P_{\theta,t}$.
(d) is true by computing the relative entropy between two normals with unit variance and means separated by $2\theta$, which is $4\theta^2$.
(e) holds for sufficiently large $n$.
\end{proof}
\fi

\begin{theorem}\label{thm:lower}
Let $\Theta, \set{\mu_1,\mu_2}$ be a structured bandit where returns are sampled from a normal distribution with unit variance.
Assume there exists a pair $\theta_1, \theta_2 \in \Theta$ and constant $\Delta > 0$ 
such that $\mu_1(\theta_1) = \mu_1(\theta_2)$ and $\mu_1(\theta_1) \geq \mu_2(\theta_1) + \Delta$ and $\mu_2(\theta_2) \geq \mu_1(\theta_2) + \Delta$.
Then the following hold:
\begin{enumerate}[(1)]
\item $\E_{\theta_1} R_n(\theta_1) \geq {1 + \log {2n \Delta^2} \over 8\Delta} - {1 \over 2} \E_{\theta_2} R_n(\theta_2)$
\item $\E_{\theta_2} R_n(\theta_2) \geq {n\Delta \over 2} \exp\left(-4\E_{\theta_1} R_n(\theta_1)\Delta\right) - \E_{\theta_1} R_n(\theta_1)$
\end{enumerate}
\end{theorem}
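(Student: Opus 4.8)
The plan is to follow the change-of-measure strategy of Bubeck \etal \cite{BPR13} (their Lemma~4, already invoked in the previous proof), exploiting the fact that arm~$1$ is \emph{uninformative}: since $\mu_1(\theta_1)=\mu_1(\theta_2)$, the reward law of arm~$1$ is identical under $\theta_1$ and $\theta_2$, so the two environments can be told apart only by sampling arm~$2$. First I would reduce regret to pull counts. Under $\theta_1$ arm~$1$ is optimal with gap at least $\Delta$, so $\E_{\theta_1} R_n(\theta_1)\geq \Delta\,\E_{\theta_1} T_2(n)$, and symmetrically $\E_{\theta_2} R_n(\theta_2)\geq \Delta\,\E_{\theta_2} T_1(n)$. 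Because only arm~$2$ carries information, the relative entropy between the trajectory laws $\mathbb P_{\theta_1,n}$ and $\mathbb P_{\theta_2,n}$ decomposes as $\E_{\theta_1} T_2(n)$ times the per-sample relative entropy of arm~$2$, which for unit-variance normals equals $\tfrac12(\mu_2(\theta_2)-\mu_2(\theta_1))^2$. The three hypotheses combine to give $\mu_2(\theta_2)-\mu_2(\theta_1)=(\mu_2(\theta_2)-\mu_1(\theta_2))+(\mu_1(\theta_1)-\mu_2(\theta_1))\geq 2\Delta$, and substituting $\E_{\theta_1}T_2(n)=\E_{\theta_1}R_n(\theta_1)/(\mu_1(\theta_1)-\mu_2(\theta_1))$ produces the exponent $4\Delta\,\E_{\theta_1}R_n(\theta_1)$ appearing in part~(2).

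For part~(2) I would fix the event $B=\set{T_1(n)\geq n/2}$ and apply the Bretagnolle--Huber inequality to obtain $\Po{\theta_1,n}{B^c}+\Po{\theta_2,n}{B}\geq \tfrac12\exp(-\KL(\mathbb P_{\theta_1,n},\mathbb P_{\theta_2,n}))$. Under $\theta_2$ the event $B$ forces $T_1(n)\geq n/2$ and hence regret at least $\Delta n/2$, so $\E_{\theta_2}R_n(\theta_2)\geq \tfrac{\Delta n}{2}\Po{\theta_2,n}{B}$; meanwhile Markov's inequality bounds $\Po{\theta_1,n}{B^c}=\Po{\theta_1,n}{T_2(n)>n/2}\leq 2\E_{\theta_1}R_n(\theta_1)/(n\Delta)$. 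Substituting the relative-entropy bound and rearranging yields the stated exponential lower bound, with the additive $-\E_{\theta_1}R_n(\theta_1)$ absorbing the $\Po{\theta_1,n}{B^c}$ correction.

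For part~(1) the logarithmic factor cannot arise from the single-event argument above (which is linear in $n$); instead I would use the data-processing form of the change of measure, $\KL(\mathbb P_{\theta_1,n},\mathbb P_{\theta_2,n})\geq \mathrm{kl}(\Po{\theta_1,n}{B},\Po{\theta_2,n}{B})$ for the same event $B$, where $\mathrm{kl}$ is the binary relative entropy. Here $\Po{\theta_1,n}{B}$ is close to $1$ when $\E_{\theta_1}R_n(\theta_1)$ is small, while $\Po{\theta_2,n}{B}\leq 2\E_{\theta_2}R_n(\theta_2)/(n\Delta)$ is small when $\E_{\theta_2}R_n(\theta_2)$ is small; bounding $\mathrm{kl}(p,q)\geq p\log(1/q)-\log 2$ produces a term of order $\log(n\Delta^2)$, and combining with $\KL\leq 4\Delta\,\E_{\theta_1}R_n(\theta_1)$ and solving for $\E_{\theta_1}R_n(\theta_1)$ gives part~(1), the additive $-\tfrac12\E_{\theta_2}R_n(\theta_2)$ again coming from the correction terms.

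The step I expect to be the main obstacle is the relative-entropy bookkeeping: one must verify carefully that the per-sample divergence of arm~$2$ is governed by $\Delta$ (this is exactly where $\mu_1(\theta_1)=\mu_1(\theta_2)$ together with the two gap hypotheses is used) so that the clean exponent $4\Delta\,\E_{\theta_1}R_n(\theta_1)$ emerges, and, for part~(1), to extract the $\log(n\Delta^2)$ factor from the binary relative entropy while keeping the dependence on $\E_{\theta_2}R_n(\theta_2)$ additive rather than buried inside the logarithm.
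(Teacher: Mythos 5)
Your plan for part (2) captures the paper's core mechanism --- arm $1$ is uninformative, the trajectory KL is carried entirely by the arm-$2$ pulls, and a Bretagnolle--Huber change of measure converts small pull counts into a lower bound --- but your implementation through the single event $B=\set{T_1(n)\geq n/2}$ plus Markov's inequality loses a factor of $2$: it yields $\E_{\theta_1}R_n(\theta_1)+\E_{\theta_2}R_n(\theta_2)\geq \tfrac{n\Delta}{4}e^{-\KL}$ rather than $\tfrac{n\Delta}{2}e^{-\KL}$, so the stated constant in (2) is not recovered. The paper avoids Markov entirely by applying the change-of-measure lemma once per time step, to the events $\set{I_t=2}$ and $\set{I_t=1}$:
\eq{
\E_{\theta_1}R_n(\theta_1)+\E_{\theta_2}R_n(\theta_2)
&\geq \Delta\sum_{t=1}^n\left(\mathbb{P}_{\theta_1,t}\set{I_t=2}+\mathbb{P}_{\theta_2,t}\set{I_t=1}\right)
\geq \frac{\Delta}{2}\sum_{t=1}^n e^{-\KL(\mathbb{P}_{\theta_1,t},\mathbb{P}_{\theta_2,t})}
\geq \frac{n\Delta}{2}\,e^{-\KL(\mathbb{P}_{\theta_1,n},\mathbb{P}_{\theta_2,n})},
}
using that the KL is nondecreasing in $t$. (Your worry about the KL bookkeeping --- that the hypotheses only bound the gaps from below while the exponent needs the KL bounded from above --- is real, but it is shared by the paper, which simply asserts $\KL(\mathbb{P}_{\theta_1,n},\mathbb{P}_{\theta_2,n})=4\Delta^2\E_{\theta_1}T_2(n)$ ``by checking''; it does not distinguish your argument from theirs.)

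The genuine gap is part (1). The paper never touches the binary relative entropy: part (1) follows from the \emph{same} exponential inequality as part (2) by a self-bounding optimization. Writing $x=\E_{\theta_1}R_n(\theta_1)$ and $y=\E_{\theta_2}R_n(\theta_2)$, the inequality $x\geq \tfrac{n\Delta}{2}e^{-4\Delta x}-y$ gives
\eq{
x \;\geq\; \frac{x}{2}+\frac{1}{2}\left(\frac{n\Delta}{2}e^{-4\Delta x}-y\right)
\;\geq\; \min_{z\geq 0}\left(\frac{z}{2}+\frac{n\Delta}{4}e^{-4\Delta z}\right)-\frac{y}{2}
\;=\; \frac{1+\log(2n\Delta^2)}{8\Delta}-\frac{y}{2},
}
the minimum (for $2n\Delta^2\geq 1$) being attained at $z=\tfrac{1}{4\Delta}\log(2n\Delta^2)$. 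This one-line trick is precisely what produces the logarithm while keeping $y$ additive, which is the feature you correctly flag as the main obstacle in your route. Your proposed alternative via $\KL\geq \mathrm{kl}\left(\mathbb{P}_{\theta_1,n}\set{B},\mathbb{P}_{\theta_2,n}\set{B}\right)$ and $\mathrm{kl}(p,q)\geq p\log(1/q)-\log 2$ does not close this gap: after the case split needed to guarantee $\mathbb{P}_{\theta_1,n}\set{B}\geq \tfrac12$, it yields at best $x\geq \tfrac{1}{8\Delta}\log\tfrac{n\Delta}{2y}-\tfrac{\log 2}{4\Delta}$, and at $y=\tfrac{1}{4\Delta}$ this falls short of the bound claimed in (1) by $\tfrac{\log 2}{4\Delta}$ for every $n$. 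So the kl route proves a bound of a different, incomparable form (stronger in some regimes, strictly weaker in others), not statement (1); the missing ingredient is the minimization over $x$ displayed above.
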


A natural example where the conditions are satisfied is depicted in Figure \ref{fig:ce}.(b) and by choosing $\theta_1 = -1$, $\theta_2 = 1$.
We know from Theorem \ref{thm:finite} that UCB-S enjoys finite regret of $\E_{\theta_2} R_n(\theta_2) \in O({1\over\Delta} \log{1 \over \Delta})$ and logarithmic 
regret $\E_{\theta_1} R_n(\theta_1) \in O({{1 \over \Delta} \log n})$.
Part 1 of Theorem \ref{thm:lower} shows that if we demand finite regret $\E_{\theta_2} R_n(\theta_2) \in O(1)$, then 
the regret $\E_{\theta_1} R_n(\theta_1)$ is necessarily logarithmic.
On the other hand, part 2 shows that if we demand $\E_{\theta_1} R_n(\theta_1) \in o(\log(n))$, then the regret $\E_{\theta_2} R_n(\theta_2) \in \Omega(n)$.
Therefore the trade-off made by UCB-S essentially cannot be improved. 

\ifsup
\begin{proof}[Proof of Theorem \ref{thm:lower}]
\todot{Fix the notation in this proof}
Again, we make use of the techniques of \cite{BPR13}.
\eqn{
\nonumber \E_{\theta_1} R_n(\theta_1) + \E_{\theta_2} R_n(\theta_2)
&\sr{(a)}\geq \Delta \left(\E_{\theta_1} T_2(n) + \E_{\theta_2} T_1(n)\right) 
\sr{(b)}\geq \Delta \sum_{t=1}^n \left(\Po{\theta_1}{I_t = 2} + \Po{\theta_2}{I_t = 1}\right) \\
\nonumber &\sr{(c)}\geq {\Delta \over 2} \sum_{t=1}^n \exp\left(-\KL(\mathbb P_{\theta_1,t}, \mathbb P_{\theta_2,t})\right) 
\sr{(d)}\geq {n\Delta \over 2}  \exp\left(-\KL(\mathbb P_{\theta_1,n}, \mathbb P_{\theta_2,n})\right) \\
&\sr{(e)}\geq {n\Delta \over 2} \exp\left(-4\Delta^2 \E_{\theta_1} T_2(n)\right) 
\label{eq:lower}\sr{(f)}\geq {n\Delta \over 2} \exp\left(-4\Delta \E_{\theta_1} R_n(\theta_1) \right)
}
where (a) follows from the definition of the regret and the bandits used.
(b) by the definition of $T_k(n)$.
(c) by Lemma 4 of \cite{BPR13}.
(d) since the relative entropy $\KL(\mathbb P_{\theta_1,t}, \mathbb P_{\theta_2,t})$ is increasing with $t$.
(e) By checking that $\KL(\mathbb P_{\theta_1,n}, \mathbb P_{\theta_2,n}) = 4\Delta^2 \E_{\theta_1} T_2(n)$.
(f) by substituting the definition of the regret.
Now part 2 is completed by rearranging (\ref{eq:lower}).
For part 1 we also rearrange (\ref{eq:lower}) to obtain
\eq{
\E_{\theta_1} R_n(\theta_1) \geq {n\Delta \over 2} \exp\left(-4\Delta \E_{\theta_1} R_n(\theta_1)\right) - \E_{\theta_2} R_n(\theta_2)
}
Letting $x = \E_{\theta_1} R_n(\theta_1)$ and using the constraint above we obtain: 
\eq{
x \geq {x \over 2} + {1 \over 2} \left({n\Delta \over 2}\exp\left(-4\Delta x\right) - \E_{\theta_2} R_n(\theta_2)\right).
}
But by simple calculus the function on the right hand side is minimised for $x = {1 \over 4\Delta} \log (2n\Delta^2)$, which leads to
\eq{
\E_{\theta_1} R_n(\theta_1) \geq {\log (2n\Delta^2) \over 8\Delta} + {1 \over 8\Delta} - {1 \over 2}\E_{\theta_2} R_n(\theta_2).
}
\end{proof}
\fi

\subsubsect{Discussion of Figure \ref{fig:ce}.(c/d)}
In both examples there is an ambiguous region for which the lower bound (Theorem \ref{thm:lower}) does not show that logarithmic regret is unavoidable, but where
Theorem \ref{thm:finite} cannot be applied to show that UCB-S achieves finite regret. We managed to show that finite regret is
possible in both cases by using a different algorithm. For (c) we could construct a carefully tuned algorithm for which the regret
was at most $O(1)$ if $\theta \leq 0$ and $O({1 \over \theta} \log \log{1 \over \theta})$ otherwise. This result contradicts
a claim by Bubeck et.\ al.\ \cite[Thm. 8]{BPR13}. Additional discussion of the ambiguous case in general, as well as this specific example, may 
be found in the supplementary material. One observation is that unbridled optimism is the cause of the failure of
UCB-S in these cases. This is illustrated by Figure \ref{fig:ce}.(d) with $\theta \leq 0$. No matter how narrow the confidence interval about $\mu_1$,
if the second action has not been taken sufficiently often, then there will still be some belief that $\theta > 0$ is possible where the second action is optimistic,
which leads to logarithmic regret. Adapting the algorithm to be slightly risk averse solves this problem.

\section{Experiments}\label{sec:experiments}
We tested Algorithm \ref{alg:ucbd} on a selection of structured bandits depicted in Figure \ref{fig:examples} and compared to UCB \cite{ACF02,BC12}.
Rewards were sampled from normal distributions with unit variances. For UCB we chose $\alpha = 2$, while we used the theoretically
justified $\alpha = 4$ for Algorithm \ref{alg:ucbd}. All code is available in the supplementary material. Each data-point is the
average of 500 independent samples with the blue crosses and red squares indicating the regret of UCB-S and UCB respectively.

{
{\centering
\scriptsize
\begin{minipage}{4.6cm}
\begin{tikzpicture}[baseline,font=\scriptsize]
  \begin{axis}[ xlabel={$\theta$},
                ylabel={$\hat \E_\theta R_n(\theta)$},
                xmin=-0.21,
                xmax=0.21,
                height=3cm,
                width=4.5cm,
                mark size=0.5pt,
                compat=newest]
    \addplot+[only marks,mark=x,mark size=1.5pt] table[x index=0,y index=1] {exp-a-1.txt};
    \addplot+[only marks] table[x index=0,y index=2] {exp-a-1.txt};
  \end{axis}
\end{tikzpicture} \\
$K = 2$, $\mu_1(\theta) = \theta$, $\mu_2(\theta) = -\theta$, \\ 
$n = 50\,000$ 
(see Figure \ref{fig:examples}.(a)) 
\end{minipage}
\begin{minipage}{4.6cm}
\begin{tikzpicture}[baseline,font=\scriptsize]
  \begin{axis}[ xlabel={$n$},
                ylabel={$\hat \E_\theta R_n(\theta)$},
                height=3cm,
                width=4.5cm,
                mark size=0.5pt,
                scaled ticks=false,
                compat=newest,
                xtick={0,50000,100000},
                xticklabels={0,5e4,1e5}]
    \addplot+[only marks,mark=x,mark size=1.5pt] table[x index=0,y index=1] {exp-a-2.txt};
    \addplot+[only marks] table[x index=0,y index=2] {exp-a-2.txt};
  \end{axis}
\end{tikzpicture} \\
$K = 2$, $\mu_1(\theta) = \theta$, $\mu_2(\theta) = -\theta$, \\
$\theta = 0.04$
(see Figure \ref{fig:examples}.(a)) 
\end{minipage}
\begin{minipage}{4.6cm}
\begin{tikzpicture}[baseline,font=\scriptsize]
  \begin{axis}[ xlabel={$\theta$},
                ylabel={$\hat \E_\theta R_n(\theta)$},
                height=3cm,
                width=4.5cm,
                mark size=0.5pt,
                scaled ticks=false,
                compat=newest]
    \addplot+[only marks,mark=x,mark size=1.5pt] table[x index=0,y index=1] {exp-b-1.txt};
    \addplot+[only marks] table[x index=0,y index=2] {exp-b-1.txt};
  \end{axis}
\end{tikzpicture} \\
$K = 2$, $\mu_1(\theta) = 0$, $\mu_2(\theta) = \theta$, \\
$n = 50\,000$
(see Figure \ref{fig:examples}.(b)) 
\end{minipage}

\begin{minipage}{9cm}
\normalsize
The results show that Algorithm \ref{alg:ucbd} typically out-performs regular UCB. The exception is the top right experiment
where UCB performs slightly better for $\theta < 0$. This is not surprising, since in this case the structured version of UCB
cannot exploit the additional structure and suffers due to worse constant factors. On the other hand, if $\theta > 0$, then
UCB endures logarithmic regret and performs significantly worse than its structured counterpart. The superiority of Algorithm \ref{alg:ucbd}
would be accentuated in the top left and bottom right experiments by increasing the horizon.
\end{minipage}
\hspace{0.2cm}
\begin{minipage}{4.6cm}
\begin{tikzpicture}[baseline,font=\scriptsize]
  \begin{axis}[ xlabel={$\theta$},
                ylabel={$\hat \E_\theta R_n(\theta)$},
                height=3cm,
                width=4.5cm,
                mark size=0.5pt,
                scaled ticks=false,
                compat=newest]
    \addplot+[only marks,mark=x,mark size=1.5pt] table[x index=0,y index=1] {exp-c-1.txt};
    \addplot+[only marks] table[x index=0,y index=2] {exp-c-1.txt};
  \end{axis}
\end{tikzpicture} \\
$K = 2$, $\mu_1(\theta) = \theta\ind{\theta > 0}$, \\ $\mu_2(\theta) = -\theta\ind{\theta < 0}$, \\
$n=50\,000$
(see Figure \ref{fig:examples}.(c)) 
\end{minipage}
}

\normalsize

\section{Conclusion}\label{sec:conclusions}



The limitation of the new approach is that the proof techniques and algorithm are most suited to the case where the number of actions is relatively small. 
Generalising the techniques to large action spaces is therefore an important open problem.
There is still a small gap between the upper and lower bounds, and the lower bounds have only been proven for special examples. Proving a general problem-dependent
lower bound is an interesting question, but probably extremely challenging given the flexibility of the setting. We are also curious to know if there exist
problems for which the optimal regret is somewhere between finite and logarithmic.
Another question is that of how to define Thompson sampling for structured bandits. Thompson sampling has recently attracted a great deal 
of attention \citep{KKNM12,AG12b,KKM13,AG13,BC13},
but so far we are unable even to define an algorithm resembling Thompson sampling for the general structured bandit problem.
\ifsup
Not only because we have not endowed $\Theta$ with a topology, but also because choosing a reasonable prior seems rather problem-dependent. 
An advantage of our approach is that we do not rely on knowing the distribution of the rewards while with one notable exception \citep{BC13} this
is required for Thompson sampling.
\fi

\ifsup
\else
\vspace{-0.3cm}
\fi
\subsubsect{Acknowledgements} Tor Lattimore was supported by the Google Australia Fellowship for Machine Learning and the Alberta Innovates 
Technology Futures, NSERC.
The majority of this work was completed while R\'emi Munos was visiting Microsoft Research, New England. 
This research was partially supported by the European Community's Seventh Framework Programme under grant agreements no.\ 270327 (project CompLACS).

\bibliographystyle{plain}
\bibliography{d-bandits}

\ifsup
\appendix

\allowdisplaybreaks

\section{Ambiguous Case}

\newcommand{\thetao}{\theta_{\circ}}
\newcommand{\easy}{\Theta_{\operatorname{easy}}}
\newcommand{\amb}{\Theta_{\operatorname{amb}}}
\newcommand{\hard}{\Theta_{\operatorname{hard}}}

We assume for convenience that $K = 2$ and $\mu_1(\theta) \neq \mu_2(\theta)$ for all $\theta \in \Theta$. The second assumption is non-restrictive, since an
algorithm cannot perform badly on the $\theta$ for which $\mu_1(\theta) = \mu_2(\theta)$, so we can simply remove these points from the parameter space.
Now $\Theta$ can be partitioned into three sets according to whether or not finite regret is
expected by Theorem \ref{thm:finite}, or impossible by Theorem \ref{thm:lower}.
\eq{
\easy &\defined \set{\theta \in \Theta : \exists \epsilon > 0 \text{ such that } \left|\mu_{i^*(\theta)}(\theta') - \mu_{i^*(\theta)}(\theta')\right| < \epsilon \implies
i^*(\theta') = i^*(\theta)} \\
\hard &\defined \set{\theta \in \Theta : \exists \theta' \in \Theta \text{ such that } \mu_{i^*(\theta)}(\theta) = \mu_{i^*(\theta)}(\theta') 
\text{ and } i^*(\theta') \neq i^*(\theta)} \\
\amb &\defined \Theta - \easy - \hard 
}
The topic of this section is to study whether or not finite regret is possible on $\amb$, and what sacrifices need to be made in order to achieve this.
Some examples are given in Figure \ref{fig:ambiguous}. Note that (a) was considered by Bubeck et.\ al.\ \cite[Thm. 8]{BPR13} and will
receive special attention here.
\begin{figure}[H]
\centering
\scriptsize
\begin{tikzpicture}[thick,scale=0.5]
\draw[arm1] (0,2) edge[-] (2,2);
\draw[arm1] (2,2) edge[-] (4,0);
\draw[arm2]  (0,0.1) edge[-] (2,0.1);
\draw[arm2]  (2,2) edge[-] (4,2);
\node at (2,4.5) {(a)};
\node[align=left] at (2,-1.5) {$\amb = [-1,0]$ \\ $\easy = (0,1]$};
\drawgraph
\end{tikzpicture}
\begin{tikzpicture}[thick,scale=0.5]
\draw[arm1] (0,2) edge[-] (2,2);
\draw[arm1] (2,2) edge[-] (4,0);
\draw[arm2]  (0,0) edge[-] (2,2);
\draw[arm2]  (2,2) edge[-] (4,2);
\node at (2,4.5) {(b)};
\node[align=left] at (2,-1.5) {$\amb = [-1,1]$ \\ $\easy = \emptyset$ };
\drawsimplegraph
\end{tikzpicture}
\begin{tikzpicture}[thick,scale=0.5]
\draw[arm1] (0,2) edge[-] (2,2);
\draw[arm1] (2,2) edge[-] (4,3);
\draw[arm2]  (0,0.1) edge[-] (2,0.1);
\draw[arm2]  (2,3.9) edge[-] (4,3.9);
\node at (2,4.5) {(c)};
\node[align=left] at (2,-1.5) {$\amb = [-1,0]$ \\ $\easy = (0,1]$};
\drawsimplegraph
\end{tikzpicture}
\begin{tikzpicture}[thick,scale=0.5]
\draw[arm1] (0,2) edge[-] (2,2);
\draw[arm1] (2,2) edge[-] (4,3);
\draw[arm2]  (0,0.1) edge[-] (2,0.1);
\draw[arm2]  (2,2) edge[-] (4,3.9);
\node[align=left] at (2,-1.5) {$\amb = [-1,0]$ \\ $\easy = (0,1]$};
\node at (2,4.5) {(d)};
\drawsimplegraph
\end{tikzpicture}
\hspace{0.5cm}
\begin{tikzpicture}[thick,scale=0.5]
\draw[thin] (-1,0) rectangle (2.1,4);
\draw (-0.5,2) edge[arm1] node[right=0.3cm] {$\mu_1$} (0.7,2);
\draw (-0.5,1) edge[arm2] node[right=0.3cm] {$\mu_2$} (0.7,1);
\node at (0,3.3) {{\scriptsize \bf Key:}};
\node at (1,-2.1) {\color{white}a hidden message};
\end{tikzpicture}

\caption{Ambiguous examples}\label{fig:ambiguous}
\end{figure}
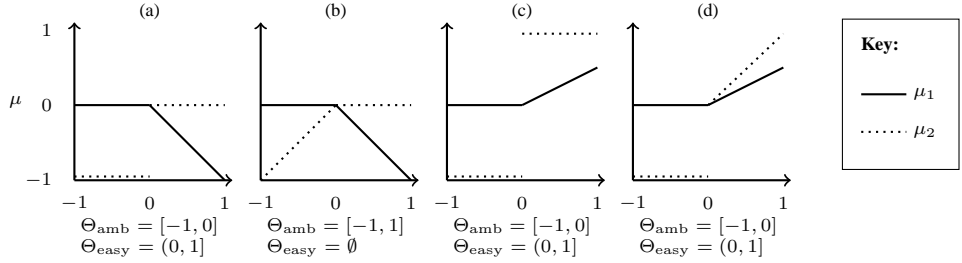

The main theorem is this section shows that finite regret is indeed possible for many $\theta \in \amb$ without incurring significant additional regret
for $\theta \in \easy$ and retaining logarithmic regret for $\theta \in \hard$.
The following algorithm is similar to Algorithm \ref{alg:ucbd}, but favours actions which may be optimal for some plausible ambiguous $\theta$.
Theorems will be given subsequently, but proofs are omitted. \todot{Add proofs?}

\begin{algorithm}[H]
\caption{}
\label{alg:ucbdra}
\begin{algorithmic}[1]
\State {\bf Input:} functions $\mu_1, \cdots, \mu_k:\Theta \to [0,1]$, $\set{\beta_t}_{t=1}^\infty$
\State $\kappa_1 = 0$
\For{$t \in 1,\ldots,\infty$}
\State Define confidence set:
$\displaystyle \tilde\Theta_t \leftarrow \set{\tilde \theta : \forall i,\;\; \left|\mu_i(\tilde \theta) - \hat \mu_{i,T_i(t-1)}\right| < 
\sqrt{{\alpha\sigma^2 \log t \over T_i(t-1)} }}$
\If{$\kappa_t = 0 \;\wedge\; \exists \tilde\Theta_t \cap \amb \neq \emptyset$}
\State Choose $\theta \in \tilde\Theta_t \cap \amb$ arbitrarily and set $\kappa_t = i^*(\theta)$
\EndIf
\State Choose $\displaystyle I_t = \argmax_{k} \sup_{\theta \in \tilde\Theta_t} \mu_k(\theta) + \ind{k = \kappa_t} \sqrt{{\beta_t \log t \over T_k(t-1)}}$
\State $\kappa_{t+1} \leftarrow \ind{I_t = \kappa_t}$
\EndFor
\end{algorithmic}
\end{algorithm}

\begin{theorem}\label{thm:ucbdra}
Suppose $K = 2$, $\theta \in \Theta$ and $i^* = 1$ and $\beta_t = \log \log t$ and $\Delta \defined \mu_1(\theta) - \mu_2(\theta)$.
Then Algorithm \ref{alg:ucbdra} satisfies:
\begin{enumerate}
\item $\limsup_{n\to\infty} \E_\theta R_n(\theta) / \log n < \infty$.
\item If $\theta \in \easy$, then $\E_\theta R_n(\theta) \in O({1 \over \Delta} (\log {1 \over \Delta}) (\log \log {1 \over \Delta}))$.
\item If $\theta$ is such that
\eqn{\label{eq:cond}
\displaystyle \lim_{\delta \to 0} \sup_{\theta' : |\mu_1(\theta) - \mu_1(\theta')| < \delta} {\mu_2(\theta') - \mu_1(\theta') \over |\mu_1(\theta) - \mu_1(\theta')|} < \infty,
}
then 
$\lim_{n\to\infty} \E_\theta R_n(\theta) < \infty$.
\end{enumerate}
\end{theorem}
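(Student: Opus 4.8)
The plan is to write the regret as $\E_\theta R_n(\theta)=\Delta\,\E_\theta T_2(n)$ (here $K=2$ and $\best=1$) and to bound $\E_\theta T_2(n)$ by classifying every pull of the sub-optimal arm according to the value of $\kappa_t$. As in the proofs of Theorems~\ref{thm:ucb} and~\ref{thm:finite} I would first condition on the good event $\{F_t=0\}$, on which $\true\in\tilde\Theta_t$; by Lemma~\ref{lem:failure} its complement contributes only the constant $\sum_t\P{F_t=1}<\infty$ (for $\alpha>2$), so throughout I may assume $\sup_{\tilde\theta\in\tilde\Theta_t}\mu_1(\tilde\theta)\geq\mu_1(\theta)$ and that each $\hat\mu_{i,\cdot}$ lies within the stated width of $\mu_i(\theta)$. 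Writing $w_{i,t}=\sqrt{\alpha\sigma^2\log t/T_i(t-1)}$ for the confidence half-width, the two sources of arm-$2$ pulls are \emph{optimism}, when $\kappa_t\neq2$ yet arm $2$ maximises the index, and \emph{forced exploration}, when $\kappa_t=2$ and the bonus $\sqrt{\beta_t\log t/T_2(t-1)}$ tips the index toward arm $2$.

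For the optimism pulls I would argue exactly as in Theorem~\ref{thm:ucb}: on the good event arm $2$ wins only when some consistent $\tilde\theta$ has $\mu_2(\tilde\theta)$ at least the arm-$1$ index, which by arm-$2$ consistency forces $w_{2,t}\gtrsim\Delta$ and hence $T_2(t-1)=O(\log n/\Delta^2)$, the logarithmic baseline for part~(1). Part~(3) is where condition~(\ref{eq:cond}) enters. It yields a constant $C$ with $\mu_2(\tilde\theta)-\mu_1(\tilde\theta)\leq C\,|\mu_1(\theta)-\mu_1(\tilde\theta)|$ for all $\tilde\theta$ close to $\theta$ in $\mu_1$-value; combined with arm-$1$ consistency $|\mu_1(\theta)-\mu_1(\tilde\theta)|\lesssim w_{1,t}$, the optimistic excess of arm $2$ over the truth is then at most $O(C\,w_{1,t})$. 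Since the bonus on the best arm equals $w_{1,t}$ times a factor $\asymp\sqrt{\beta_t/(\alpha\sigma^2)}=\sqrt{\log\log t}$, once $t$ is large enough that $\log\log t$ exceeds a fixed multiple of $(1+C)^2$ the best-arm index strictly dominates, arm $2$ is never chosen, and only finitely many optimism pulls survive. This is precisely the contrast between Figure~\ref{fig:ambiguous}.(a), where the excess is $O(w_{1,t})$ and the condition holds, and (c), where it is a fixed $\Theta(1)$ and the condition fails.

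For the forced pulls I would use that $\kappa_t=2$ requires an ambiguous $\tilde\theta\in\tilde\Theta_t$ with $i^*(\tilde\theta)=2$, i.e.\ $\mu_2(\tilde\theta)>\mu_1(\tilde\theta)\approx\mu_1(\theta)=\mu_2(\theta)+\Delta$; arm-$2$ consistency again forces $w_{2,t}\gtrsim\Delta$, so such parameters persist only while $T_2(t-1)=O(\log t/\Delta^2)$. The bonus, being a $\sqrt{\log\log t}$ factor wider, extends each exploration episode to $T_2\approx\beta_t\log t/\Delta^2$; evaluated at the decisive scale $t\approx\Delta^{-2}$ this produces the $\tfrac1\Delta(\log\tfrac1\Delta)(\log\log\tfrac1\Delta)$ estimate of part~(2). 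For $\theta\in\easy$ I would show the episodes cease once $w_{1,t}$ drops below half the $\easy$-gap $\epsilon$, since then every parameter with $i^*=2$ is separated from $\theta$ in $\mu_1$-value and excluded by arm-$1$ consistency; for part~(1) I would instead bound the total number and size of episodes through Lemma~\ref{lem:high-prob} to keep the aggregate $O(\log n)$.

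The main obstacle is the bookkeeping for these forced-exploration episodes. The $\sqrt{\log\log t}$ inflation of each burst, the reset rule for $\kappa$, and the slow re-entry of ambiguous parameters into $\tilde\Theta_t$ as $w_{2,t}$ regrows must be shown not to accumulate beyond the stated bounds. Concretely, I expect the delicate step to be proving that after an episode pushes $T_2$ to $\approx\beta_t\log t/\Delta^2$ the excluded $i^*=2$ parameters remain excluded long enough — via arm-$1$ consistency in the $\easy$ and condition-(\ref{eq:cond}) cases, and via a careful geometric-sum control of episode times in general — so that the bonus \emph{uniformly} overtakes the condition-(\ref{eq:cond}) excess rather than merely eventually, which is what separates bounded regret in part~(3) from the logarithmic guarantee of part~(1).
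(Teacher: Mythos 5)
The paper never actually proves Theorem~\ref{thm:ucbdra}: the appendix states that ``Theorems will be given subsequently, but proofs are omitted,'' so there is no reference proof to compare yours against and your proposal must stand on its own. It does not, because the step you explicitly defer --- the ``bookkeeping for these forced-exploration episodes'' --- is not bookkeeping but the crux, and the plan you sketch for it fails.

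Consider a $\kappa_t=2$ episode, writing $w_{i,t}=\sqrt{\alpha\sigma^2\log t/T_i(t-1)}$ as in your proposal. The bonus now sits on arm $2$, so neither easiness nor condition~(\ref{eq:cond}) can terminate the episode: those assumptions only bound $\sup_{\tilde\theta\in\tilde\Theta_t}\mu_2(\tilde\theta)$ from \emph{above}, while on the good event $\sup_{\tilde\theta\in\tilde\Theta_t}\mu_2(\tilde\theta)\geq\mu_2(\theta)$ (since $\theta\in\tilde\Theta_t$) and $\sup_{\tilde\theta\in\tilde\Theta_t}\mu_1(\tilde\theta)\leq\mu_1(\theta)+2w_{1,t}$, so arm $2$'s index dominates arm $1$'s whenever $\sqrt{\beta_t\log t/T_2(t-1)}\geq\Delta+2w_{1,t}$. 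Every episode therefore drives $T_2$ up to order $\beta_t\log t/\Delta^2=\log t\log\log t/\Delta^2$. Moreover, episodes can recur forever: once $T_2$ is frozen, $w_{2,t}$ grows back with $\log t$, ambiguous parameters with $i^*=2$ become consistent again, and the selection step may set $\kappa_t=2$ anew. In Figure~\ref{fig:ambiguous}.(b) with $\theta<0$ --- an instance satisfying~(\ref{eq:cond}), to which parts (1) and (3) are supposed to apply --- the parameters $0<\theta'<2w_{1,t}$ re-enter $\tilde\Theta_t$ each time $\hat\mu_{2,T_2(t-1)}+w_{2,t}>0$, which happens along an infinite sequence of times, giving $T_2(n)$ of order $\Delta^{-2}\log n\log\log n$ and regret of order $\Delta^{-1}\log n\log\log n$ on a subsequence. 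That contradicts part (1), not merely part (3). In particular your fallback for part (1), bounding ``the total number and size of episodes through Lemma~\ref{lem:high-prob},'' cannot work: that lemma controls the unboosted UCB-S counts, and for the boosted arm the best available envelope is the $\beta_t$-inflated one, which is already superlogarithmic. Any correct proof must first pin down a mechanism that places the bonus on a suboptimal arm only finitely often --- for instance a rule for \emph{which} ambiguous parameter is selected, or a repaired reset rule (as written, $\kappa_{t+1}\leftarrow\ind{I_t=\kappa_t}$ is $\{0,1\}$-valued and is not even an arm index) --- and with the selection left ``arbitrary'' the statement itself appears false, which is presumably why the paper left it unproven.

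Two further gaps. For part (2), your own stopping mechanism (new episodes cease once $w_{1,t}<\epsilon/2$) runs until $t\approx\epsilon^{-2}$, where $\epsilon$ is the easiness margin; nothing ties the last suboptimal pull to the ``decisive scale $t\approx\Delta^{-2}$'' you assert, and since $\epsilon$ can be arbitrarily small compared with $\Delta$, your route yields a bound of order $\Delta^{-1}\left(\log\frac1\epsilon\right)\left(\log\log\frac1\epsilon\right)$, not the $\Delta$-only bound claimed. A $\Delta$-only bound would require the comparison ``bonus on arm $1$ $\leq 2w_{2,t}$,'' which forces $T_1\lesssim T_2/\beta_t$ and hence confines all suboptimal pulls to times $t\lesssim\log t/\Delta^2$; you invoke that mechanism only in part (3). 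For part (3), your threshold argument (once $\log\log t$ exceeds a multiple of $(1+C)^2$, the boosted arm-$1$ index dominates the (\ref{eq:cond})-excess $2(1+C)w_{1,t}$) is sound as far as it goes, but it presupposes $\kappa_t=1$; you never show that the selection step eventually sets $\kappa_t=1$ rather than $\kappa_t=2$. That is automatic in Figure~\ref{fig:ambiguous}.(a) and (d), where every ambiguous parameter has $i^*=1$, but not in general --- and where it fails is exactly where the episode problem above re-enters.
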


\begin{remark}
The condition (\ref{eq:cond}) not satisfied for $\theta \in \hard$, since in this case there exists some $\theta'$ with $\mu_1(\theta) = \mu_1(\theta')$
but where $\mu_2(\theta') - \mu_1(\theta') > 0$.
The condition may not be satisfied even for $\theta \in \amb$. See, for example, Figure \ref{fig:ambiguous}.(c). The condition {\it is} satisfied for all
other ambiguous $\theta$ for the problems shown in Figure \ref{fig:ambiguous}.(a,b,d) where the risk $\mu_2(\theta') - \mu_1(\theta')$ decreases linearly
as $\mu_1(\theta) - \mu_1(\theta')$ converges to zero.
\end{remark}

The following theorem shows that you cannot get finite regret for the ambiguous case where (\ref{eq:cond}) is not satisfied without making sacrifices
in the easy case.

\begin{theorem}
Suppose $\theta \in \amb$ with $i^*(\theta) = 1$ 
and $\E_{\theta} R_n(\theta) \in O(1)$. Then there exists a constant $c > 0$ such that for each $\theta'$ with $i^*(\theta') = 2$ we have
\eq{
\E_{\theta'} R_n(\theta') \geq c {\mu_2(\theta') - \mu_1(\theta') \over (\mu_1(\theta) - \mu_1(\theta'))^2}.
}
\end{theorem}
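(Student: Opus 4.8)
The plan is to reduce the claim to a lower bound on how often the sub-optimal arm~$1$ is sampled under $\theta'$. Since $\E_{\theta'} R_n(\theta') = \Delta'\,\E_{\theta'} T_1(n)$ with $\Delta' \defined \mu_2(\theta') - \mu_1(\theta') > 0$, and $T_1$ is non-decreasing in the horizon, it suffices to prove $\E_{\theta'} T_1(m) \ge c/d_1^2$ at a single well-chosen $m$, where $d_1 \defined |\mu_1(\theta) - \mu_1(\theta')|$, and then pass to every $n \ge m$. Finite regret at $\theta$ gives a constant $C'$ with $\E_\theta T_2(m) \le C'$ for every $m$, because $\E_\theta R_m(\theta) = (\mu_1(\theta)-\mu_2(\theta))\,\E_\theta T_2(m)$ is bounded uniformly in $m$.

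The main obstacle is that a direct change of measure between $\theta$ and $\theta'$ fails: the two environments differ in \emph{both} arms, and although arm~$2$ is sampled only $O(1)$ times under $\theta$, the arm-$2$ means may be far apart ($d_2 \defined \mu_2(\theta)-\mu_2(\theta')$ need only satisfy $|d_2|\le 1$). As in the $\KL$ computation of the previous proof, $\KL(\mathbb P_{\theta,m},\mathbb P_{\theta',m}) = \E_\theta T_1(m)\,d_1^2/2 + \E_\theta T_2(m)\,d_2^2/2$ then carries both a term growing like $m\,d_1^2$ and a non-negligible constant $C'd_2^2/2$, so neither Pinsker nor Bretagnolle--Huber closes directly. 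To isolate the informative arm-$1$ discrepancy I will introduce an auxiliary (imaginary) environment $\nu$ in which arm~$1$ has mean $\mu_1(\theta)$ and arm~$2$ has mean $\mu_2(\theta')$; the fixed algorithm can be run on $\nu$ even though $\nu\notin\Theta$. By construction $\nu$ differs from $\theta$ only in arm~$2$ and from $\theta'$ only in arm~$1$, so $\KL(\mathbb P_{\theta,m},\mathbb P_{\nu,m}) = \E_\theta T_2(m)\,d_2^2/2 \le C'/2$ while $\KL(\mathbb P_{\theta',m},\mathbb P_{\nu,m}) = \E_{\theta'} T_1(m)\,d_1^2/2$.

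Fix the event $G = \{T_1(m) > m/2\}$, whose complement is $\{T_2(m) \ge m/2\}$. First I pin down $\nu$: Markov's inequality gives $\Po{\theta}{G^c} \le 2C'/m$, and the Bretagnolle--Huber inequality (Lemma~4 of \cite{BPR13}) yields $\Po{\theta}{G^c} + \Po{\nu}{G} \ge \tfrac12 \exp(-\KL(\mathbb P_{\theta,m},\mathbb P_{\nu,m})) \ge \tfrac12 e^{-C'/2}$. The point of the detour is that this stays positive even though $\theta$ and $\nu$ are far apart, so for all $m$ above a threshold $m_1$ depending only on $C'$ we obtain $\Po{\nu}{G} \ge p_0 \defined \tfrac14 e^{-C'/2} > 0$. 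Next I transfer to $\theta'$: since $\mathbb P_{\nu,m}$ and $\mathbb P_{\theta',m}$ differ only in arm~$1$, Pinsker's inequality gives total variation at most $\tfrac{d_1}{2}\sqrt{\E_{\theta'}T_1(m)}$, hence $\Po{\theta'}{G} \ge p_0 - \tfrac{d_1}{2}\sqrt{\E_{\theta'}T_1(m)}$, while Markov under $\theta'$ gives $\Po{\theta'}{G} \le 2\E_{\theta'}T_1(m)/m$.

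Writing $x = \E_{\theta'} T_1(m)$ and combining the last two bounds yields $2x/m + \tfrac{d_1}{2}\sqrt{x} \ge p_0$. Choosing $m = \ceil{1/d_1^2}$ (which exceeds $m_1$ once $d_1$ is small, the regime in which the statement carries content) makes the first term at most $2x d_1^2$, so if $x < c/d_1^2$ for a sufficiently small constant $c$ then $2c + \sqrt{c}/2 \ge p_0$, which is false for $c$ small enough. This forces $\E_{\theta'}T_1(m) \ge c/d_1^2$, and monotonicity of $T_1$ then gives the bound for all $n \ge m$, i.e.\ $\E_{\theta'} R_n(\theta') = \Delta'\,\E_{\theta'}T_1(n) \ge c\,\Delta'/d_1^2$ for $n$ sufficiently large, consistent with the earlier lower bounds. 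Because $|d_2| \le 1$, the constant $c$ depends on $\theta$ (through $C'$) but not on $\theta'$. The crux throughout is the three-environment device that neutralises the large but rarely-sampled arm-$2$ discrepancy $d_2$ while charging only for the small arm-$1$ discrepancy $d_1$.
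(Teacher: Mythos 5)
You are in an unusual situation here: the paper never proves this theorem. The appendix states, just before Algorithm~2, that ``Theorems will be given subsequently, but proofs are omitted,'' and no argument for this statement appears anywhere in the source. So there is no paper proof to compare against; what I can do is check your argument on its merits and against the paper's own lower-bound machinery (the proof of Theorem~\ref{thm:lower}, which uses the $\KL$ decomposition and Lemma~4 of \cite{BPR13}). On that basis your proof is correct, and the three-environment device is exactly the right extension: you correctly identify that the paper's two-point technique breaks here because $\theta$ and $\theta'$ differ in \emph{both} arms (in Theorem~\ref{thm:lower} the hypothesis $\mu_1(\theta_1)=\mu_1(\theta_2)$ makes arm~1 contribute zero $\KL$, which is precisely what fails in the ambiguous case), and the auxiliary bandit $\nu=(\mu_1(\theta),\mu_2(\theta'))$ — legitimate even though $\nu\notin\Theta$, since the algorithm is just a map from histories to actions — splits the discrepancy so that Bretagnolle--Huber is charged only $\E_\theta T_2(m)\,d_2^2/2 = O(1)$ and Pinsker only $\E_{\theta'}T_1(m)\,d_1^2/2$. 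The chain Markov-under-$\theta$, BH to pin $\Po{\nu}{G}\ge p_0$, Pinsker to transfer to $\theta'$, Markov-under-$\theta'$, and the contradiction $2c+\sqrt{c}/2<p_0$ all check out, and your constant depends only on $C'$ (hence on $\theta$ and the algorithm), never on $\theta'$, as the theorem requires. Note also that $\theta\in\Theta_{\operatorname{amb}}$ enters only to guarantee $\mu_1(\theta)\neq\mu_1(\theta')$, so that $d_1>0$ and your choice of $m$ is well defined; this is worth saying explicitly.

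Two small repairs. First, your restriction to ``$d_1$ small'' is unnecessary and leaves the theorem formally unproved for $\theta'$ with $\ceil{1/d_1^2}<m_1$: simply take $m=\max\set{m_1,\ceil{1/d_1^2}}$. Both requirements you need — $m\ge m_1$ (so that $2C'/m\le \tfrac14 e^{-C'/2}$ pins down $\Po{\nu}{G}\ge p_0$) and $1/m\le d_1^2$ (so that $2x/m\le 2xd_1^2$) — hold simultaneously, and the identical contradiction yields $\E_{\theta'}T_1(m)\ge c/d_1^2$ for every $\theta'$ with $i^*(\theta')=2$, with the conclusion for all $n\ge m$ by monotonicity. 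Second, the step $\KL(\mathbb P_{\theta,m},\mathbb P_{\nu,m})\le C'/2$ silently uses $|d_2|\le 1$, i.e.\ boundedness of the mean functions. You should state this as an assumption rather than an aside: it is consistent with the paper (the algorithm's input is $\mu_k:\Theta\to[0,1]$), but it is not cosmetic — if $d_2$ were unbounded, an algorithm could use its $O(1)$ pulls of arm~2 under $\theta$ to distinguish the environments essentially for free, the achievable constant degrades like $e^{-C'd_2^2/2}$, and the theorem with a $\theta'$-independent constant would be false. So the bounded-mean hypothesis is doing real work in your proof, and flagging it is a point in the proof's favour, not a defect.
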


Therefore if the condition (3) in the statement of Theorem \ref{thm:ucbdra} is not satisfied for some ambiguous $\theta$, then we can construct a sequence
$\set{\theta'}_{i=1}^\infty$ such that $\lim_{i\to\infty} \mu_1(\theta'_i) = \mu_1(\theta)$ and where
\eq{
\lim_{i\to\infty} {(\mu_2(\theta'_i) - \mu_1(\theta'))} \E_{\theta'_i} R_n(\theta'_i) = \infty,
}
which means that the regret must grow faster than the inverse of the gap. The situation becomes worse the faster the quantity below diverges to infinity.
\eq{
\sup_{\theta' : |\mu_1(\theta) - \mu_1(\theta')| < \delta} {\mu_2(\theta') - \mu_1(\theta) \over |\mu_1(\theta) - \mu_1(\theta')|}.
}
In summary, finite regret is often possible in the ambiguous case, but may lead to worse regret guarantees in the easy case. Ultimately we are not
sure how to optimise these trade-offs and there are still many interesting unanswered questions.

\subsection*{Analysis of Figure \ref{fig:ambiguous}.(a)}

We now consider a case of special interest that was previously studied by Bubeck et.\ al.\ \cite{BPR13} and is depicted in Figure \ref{fig:ambiguous}.(a).
The structured bandit falls into the ambiguous case when $\theta \leq 0$, since no interval about $\mu_1(\theta) = 0$ is sufficient to rule 
out the possibility that the second action is in fact optimal.
Nevertheless, using a carefully crafted algorithm we show that the optimal regret is smaller than one might expect.
The new algorithm operates in phases, choosing each action a certain number of times. If all evidence points to the first action being best, then 
this is taken until its optimality is proven to be 
implausible, while otherwise the second action is taken. The algorithm is heavily biased towards choosing the first action
where estimation is more challenging, and where the cost of an error tends to be smaller.

\begin{algorithm}[H]
\caption{}
\label{alg:bubeck}
\begin{algorithmic}[1]
\State $\alpha \leftarrow 5$
\For{$\ell \in 2,\ldots,\infty$} \hspace{3cm}\hfill // Iterate over phases
\State $n_{1,\ell} = 2^\ell$ and $n_{2,\ell} = \ell^2$
\State Choose each arm $k \in \set{1,2}$ exactly $n_{k,\ell}$ times and let $\hat \mu_{k,\ell,n_{k,\ell}}$ be the average return
\State $s \leftarrow 0$
\If{$\hat \mu_{1,\ell,n_{1,\ell}} \geq -\sqrt{{\alpha \over n_{1,\ell}} \log \log n_{1,\ell}}$ and $\hat \mu_{2,n_{2,\ell}} < -1/2$}
\While{$\hat \mu_{1,\ell,n_{1,\ell}+s} \geq -\sqrt{{\alpha \log \log (n_{1,\ell}+s) \over n_{1,\ell}+s}}$}
\State Choose action $1$ and $s \leftarrow s + 1$ and $\hat \mu_{1,\ell,n_{1,\ell}+s}$ is average return of arm $1$ this phase
\EndWhile
\Else
\While{$\hat \mu_{2,\ell,n_{2,\ell}+s} \geq -{1 \over 2}$}
\State Choose action $2$ and $s \leftarrow s + 1$ and  $\hat \mu_{2,\ell,n_{2,\ell}+s}$ is average return of arm $2$ this phase
\EndWhile
\EndIf
\EndFor
\end{algorithmic}
\end{algorithm}

\begin{theorem}\label{thm:bubeck}
Let $\Theta = [-1,1]$ and $\mu_1(0) = -\theta\ind{\theta > 0} $ and $\mu_2(\theta) = -\ind{\theta \leq 0}$.
Assume returns are normally distributed with unit variance. 
Then Algorithm \ref{alg:bubeck} 
suffers regret bounded by
\eq{
\E_\theta R_n(\theta) \in \begin{cases} 
O\left({1 \over \theta} \log\log{1 \over \theta}\right) & \text{if } \theta > 0 \\
O(1) & \text{otherwise}.
\end{cases}
}
\end{theorem}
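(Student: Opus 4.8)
The plan is to determine, for each $\theta$, which of the two \textbf{while}-loops of Algorithm \ref{alg:bubeck} eventually traps the learner, and then to charge all regret either to the mandatory $n_{1,\ell}+n_{2,\ell}$ pulls at the start of each phase or to a single excursion into the sub-optimal arm. The two regimes are $\theta\le 0$, where $\mu_1=0>-1=\mu_2$ so arm $1$ is optimal with gap $1$, and $\theta>0$, where $\mu_1=-\theta<0=\mu_2$ so arm $2$ is optimal with gap $\theta$; I treat them separately.

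First I would record two concentration facts. The elementary one is the maximal inequality from Section \ref{sec:notation}, giving $\P{\hat\mu_m-\mu\le -x}\le \exp(-x^2 m/2)$; this controls the mandatory tests, since for $\mu_1=0$ the first test fails with probability $O(\ell^{-\alpha/2})$ and for $\mu_2=-1$ the event $\hat\mu_2\ge -1/2$ has probability $\le \exp(-\ell^2/8)$, both summable. The real tool is a law-of-the-iterated-logarithm boundary-crossing bound: when $\mu=0$,
\eq{
\P{\exists m\ge m_0 : \hat\mu_m < -\sqrt{{\alpha\log\log m}\over m}} \le C\,(\log_2 m_0)^{1-\alpha/4}.
}
I would prove this by peeling $m$ into dyadic blocks $[2^k,2^{k+1})$, applying the maximal inequality on each block with deviation $\sqrt{\alpha\,2^k\log\log 2^k}$ to obtain a per-block bound $O(k^{-\alpha/4})$, and summing over $k\ge \log_2 m_0$. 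This is the step I expect to be the main obstacle: the series converges only because $\alpha=5>4$, and even then the tail $(\log_2 m_0)^{-1/4}$ decays very slowly, so the subsequent arguments must avoid naively summing it over phases.

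For $\theta\le 0$ the arm-$1$ pulls are free, so the regret equals the number of arm-$2$ pulls. I would show the learner is trapped in the arm-$1$ loop of some phase $L$: conditioned on reaching phase $\ell$, leaving it requires either the test to fail (probability $O(\ell^{-\alpha/2})$) or the arm-$1$ loop to be entered and then terminate (probability $O(\ell^{-1/4})$ by the boundary-crossing bound), so the per-phase leaving probability is $O(\ell^{-1/4})$. The decisive point is that phases are nested, so $\P{L\ge \ell}\le \prod_{\ell'<\ell} C(\ell')^{-1/4}$, which decays faster than any exponential and gives $\E[L^3]<\infty$. Since each entered phase costs at most $\ell^2$ mandatory arm-$2$ pulls (plus $O(1)$ expected over-pulls in the rare arm-$2$ branch, which exits fast because $\mu_2=-1<-1/2$), the regret is $\le \E\sum_{\ell=2}^L \ell^2 = O(\E[L^3]) = O(1)$.

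For $\theta>0$ arm $2$ is optimal and its loop (threshold $-1/2$, true mean $0$) never terminates, so the learner is eventually trapped pulling the optimal arm and all regret comes from arm-$1$ pulls beforehand. The mandatory $\sum_\ell 2^\ell\theta$ contributes only $O(\theta)$, because reaching phase $\ell$ forces the test to have wrongly said ``yes'' --- hence $\hat\mu_2<-1/2$ --- in every earlier phase, an event of probability $\le \prod_{\ell'<\ell}\exp(-(\ell')^2/8)$. The dominant contribution is a single excursion into the arm-$1$ loop in an early phase: entering it needs only the constant-probability fluctuation $\hat\mu_2<-1/2$, and once inside, since $\mu_1=-\theta$, a concentration argument at the single index $m^\star=O(\theta^{-2}\log\log(1/\theta))$ (the solution of $\sqrt{\alpha\log\log m/m}\le \theta$) shows the running average drops below the threshold by $m^\star$ pulls, with a geometrically small tail beyond that. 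Thus the expected loop length is $O(m^\star)$, costing $\theta\cdot O(m^\star)=O(\theta^{-1}\log\log(1/\theta))$; later phases are reached with super-exponentially small probability and contribute negligibly, yielding the claimed bound.
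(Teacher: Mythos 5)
Your proposal is correct and follows the same skeleton as the paper's proof: decompose the regret as $\sum_{\ell} \P{L \geq \ell}\,\E[T_\ell \mid L \geq \ell]$ over phases, bound the expected overshoot of each while-loop by elementary concentration (giving $n_{2,\ell}+O(1)$ arm-2 pulls per phase when $\theta \leq 0$, and $\max\set{n_{1,\ell},\,O(\theta^{-2}\log\log\theta^{-1})}+O(\theta^{-2})$ arm-1 pulls when $\theta>0$), control $\P{L\geq\ell}$ via the two events ``$\hat\mu_2$ fluctuates above/below $-1/2$'' and ``$\hat\mu_1$ crosses the iterated-logarithm boundary'' (dyadic peeling plus the maximal inequality), and observe that for $\theta>0$ the dominant cost is a single excursion in the arm-1 loop of length $O(\theta^{-2}\log\log\theta^{-1})$. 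The genuine difference is how $\P{L\geq\ell}$ is handled for $\theta\leq 0$, and here your route is not merely an alternative --- it is the sound one. Your peeling yields the per-block exponent $\alpha/4$: on the block $[2^k,2^{k+1})$ the crossing event only supplies the deviation $\sqrt{\alpha\,2^k\log\log 2^k}$ while the maximal inequality is applied over $2^{k+1}$ samples, giving $(\log 2^k)^{-\alpha/4}$ per block and hence a per-phase termination probability of order $\ell^{1-\alpha/4}=\ell^{-1/4}$ at $\alpha=5$. The paper's corresponding steps (b)--(c) claim exponent $\alpha$, i.e.\ a per-phase bound of order $\ell^{1-\alpha}=\ell^{-4}$, which does not follow from the stated maximal inequality (it would require the deviation $2\sqrt{\alpha\,2^k\log\log 2^k}$; even the sharp law-of-the-iterated-logarithm rate gives only exponent $\alpha/2$). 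The paper then sums $\sum_\ell \ell^{1-\alpha}\cdot\ell^2$, which converges only because of that overstated exponent; with the correct $\ell^{-1/4}$ (or even the optimal $\ell^{-3/2}$) this direct summation diverges. Your observation that phases use fresh, independent samples, so that $\P{L\geq\ell}\leq\prod_{\ell'<\ell}C(\ell')^{-1/4}$ decays super-exponentially, is exactly what restores convergence at $\alpha=5$; the paper's argument as written would instead need $\alpha$ pushed well above $16$. Two minor imprecisions in your write-up, neither fatal: for $\theta>0$ a phase can also end because the arm-2 loop is entered and then terminates, so reaching phase $\ell$ forces $\hat\mu_2<-1/2$ at \emph{some} time during each earlier phase (mandatory stage or loop), not necessarily at the mandatory test --- this combined event still has probability $O(e^{-\ell^2/16})$ by the peeling bound of Lemma \ref{A:lem:peeling}, which is all you use; and for $\theta>0$ the arm-2 loop terminates with small positive probability rather than ``never,'' but again your accounting only needs that smallness.
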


\begin{remark}
Theorem \ref{thm:bubeck} contradicts a result by Bubeck et.\ al.\ \cite[Thm. 8]{BPR13}, which states that for any algorithm 
\eq{
\max\set{\E_0 R_n(0),\; \sup_{\theta > 0} \theta \cdot \E_\theta R_n(\theta)} \in \Omega\left(\log n\right).
}
But by Theorem \ref{thm:bubeck} there exists an algorithm for which 
\eq{
\max\set{\E_0 R_n(0),\;\sup_{\theta > 0} \theta \cdot \E_\theta R_n(\theta)} \in O\left(\sup_{\theta > 0} \min \set{\theta^2 n,\; \log \log{1 \over \theta}} \right)
= O(\log \log n).
}
We are currently unsure whether or not the dependence on $\log\log{1 \over \theta}$ can be dropped from the bound given in Theorem \ref{thm:bubeck}.
Note that Theorem \ref{thm:finite} cannot be applied when $\theta = 0$, so Algorithm \ref{alg:ucbd} suffers logarithmic regret in this case.
Algorithm \ref{alg:bubeck} is carefully tuned and exploits the asymmetry in the problem. 
It is possible that the result of Bubeck et.\ al.\ can be saved in spirit by using
the symmetric structured bandit depicted in Figure \ref{fig:ambiguous}.(b). This would still only give a worst-case bound and does
not imply that finite problem-dependent regret is impossible.
\end{remark}

\begin{proof}[Proof of Theorem \ref{thm:bubeck}]

It is enough to consider only $\theta \in [0, 1]$, since the returns on the arms is constant for $\theta \in [-1,0]$.
We let $L$ be the number phases (times that the outer loop is executed) and $T_\ell$ be the number of times the sub-optimal
action is taken in the $\ell$th phase. Recall that $\hat \mu_{k,\ell,t}$ denotes the empirical estimate of 
$\mu$ based on $t$ samples taken in the $\ell$th phase.

\subsubsection*{Step 1: Decomposing the regret}
The regret is decomposed:
\eq{
(\theta = 0): &\qquad \E_0 R_n(0) = \E_0 \sum_{\ell=0}^L T_\ell = \sum_{\ell=0}^\infty \Po{0}{L \geq \ell} \E_0[T_\ell|L \geq \ell] \\
(\theta > 0): &\qquad \E_\theta R_n(\theta) = \theta \E_\theta \sum_{\ell=0}^L T_\ell = \theta \sum_{\ell=0}^\infty \Po{\theta}{L \geq \ell} \E_\theta[T_\ell|L \geq \ell] 
}
\subsubsection*{Step 2: Bounding $\E_\theta[T_\ell|L \geq \ell]$}

We need to consider the cases when $\theta = 0$ and $\theta > 0$ separately. If $s \geq 1$, then
\eq{
\Po{0}{T_\ell \geq n_{2,\ell} + s | L \geq \ell} 
&\sr{(a)}\leq \Po{0}{\hat \mu_{2,\ell,n_{2,\ell}+s-1} \geq -{1 \over 2}} 
\sr{(b)}= \Po{0}{\hat \mu_{2,\ell,n_{2,\ell}+s-1} - \mu_2(0) \geq {1 \over 2}} \\
&\sr{(c)}\leq \exp\left(-{1 \over 2} (n_{2,\ell}+s-1)\right) 
\sr{(d)}\leq \exp\left(-{s \over 2}\right),
}
where (a) follows since if the second action is chosen more than $n_{2,\ell}$ times in the $\ell$th phase, then
that phase ends when $\hat \mu_{2,\ell,t} < -{1 \over 2}$, (b) by noting that $\mu_2(0) = -1$,
(c) follows from the standard concentration inequality and the fact that unit variance is assumed,
(d) since $n_{2,\ell} \geq 1$.
Therefore by Lemma \ref{A:lem:expect} we have that $\E_0[T_\ell|L\geq \ell] \leq n_{2,\ell} + 2e^{1/2}$.
Now assume $\theta > 0$ and define 
\eq{
\omega_2(x) = \min\set{z : y \geq x \log \log y,\; \forall y \geq z},
}
which satisfies $\omega_2(x) \in O(x \log \log{x})$.
If $n_{1,\ell} + s - 1 \geq \omega_2\left({4\alpha \over \theta^2}\right)$, then
\eq{
&\Po{\theta}{T_\ell \geq n_{1,\ell} + s | L \geq \ell}
\sr{(a)}\leq \Po{\theta}{\hat \mu_{1,\ell,n_{1,\ell}+s-1} \geq -\sqrt{{\alpha \over n_{\ell,1}+s-1} \log \log (n_{1,\ell} + s - 1)}} \\
&\sr{(b)}= \Po{\theta}{\hat \mu_{1,\ell,n_{1,\ell}+s-1} - \mu_1(\theta) \geq \theta - \sqrt{{\alpha \over n_{\ell,1}+s-1} \log \log (n_{1,\ell}+s-1)}} \\
&\sr{(c)}\leq \Po{\theta}{\hat \mu_{1,\ell,n_{1,\ell}+s-1} - \mu_1(\theta) \geq \theta / 2} 
\sr{(d)}\leq \exp\left(-{\theta^2 \over 8} (n_{1,\ell}+s-1)\right) 
\sr{(e)}\leq \exp\left(-{\theta^2 \over 8} s\right),
}
where (a) follows since if the first arm (which is now sub-optimal) is chosen more than $n_{1,\ell}$ times, then the phase ends if $\hat \mu_{1,\ell,t}$ drops
below the confidence interval. (b) since $\mu_1(\theta) = -\theta$.
(c) since $n_{1,\ell} + s - 1 \geq \omega_2\left({4\alpha \over \theta^2}\right)$.
(d) by the usual concentration inequality and (e) since $n_{1,\ell} \geq 1$.
Another application of Lemma \ref{A:lem:expect} yields 
\eq{
\E_\theta[T_\ell|L \geq \ell] \leq \max\set{n_{1,\ell},\; \omega_2\left({4\alpha \over \theta^2}\right)} + {8e^{\theta^2/8} \over \theta^2}, 
}
where the $\max$ appears because we demanded that $n_{1,\ell}+s-1 \geq \omega_2\left({4\alpha \over \theta^2}\right)$ and since at the
start of each phase the first action is taken at least $n_{1,\ell}$ times before the phase can end.

\subsubsection*{Bounding the number of phases}
Again we consider the cases when $\theta = 0$ and $\theta \geq 0$ separately.
\eqn{
\nonumber\Po{0}{L > \ell} 
&\sr{(a)}\leq \Po{0}{\hat \mu_{2,\ell,n_{2,\ell}} \geq -{1 \over 2} \vee \exists s : \hat \mu_{1,\ell,n_{1,\ell}+s} \leq -\sqrt{{\alpha \over n_{1,\ell}+s} \log \log(n_{1,\ell}+s)}} \\
\nonumber &\sr{(b)}\leq \Po{0}{\hat \mu_{2,\ell,n_{2,\ell}} \geq -{1 \over 2}} + \Po{0}{\exists s : \hat \mu_{1,\ell,n_{1,\ell}+s} \leq -\sqrt{{\alpha \over n_{1,\ell}+s} \log \log(n_{1,\ell}+s)}} \\
\label{eq:ambig-1} &\sr{(c)}\leq \exp\left(-{n_{2,\ell} \over 8}\right) + \Po{0}{\exists s : \hat \mu_{1,\ell,n_{1,\ell}+s} \leq -\sqrt{{\alpha \over n_{1,\ell}+s} \log \log(n_{1,\ell}+s)}}, 
}
where (a) is true since the $\ell$th phase will not end if $\hat \mu_{2,\ell,n_{2,\ell}} < -1/2$ and if $\hat \mu_{1,\ell,t}$ never drops below the confidence
interval. (b) follows from the union bound and (c) by the concentration inequality.
The second term is bounded using the maximal inequality and the peeling technique.
\eq{
&\Po{0}{\exists s : \hat \mu_{1,\ell,n_{1,\ell}+s} \leq -\sqrt{{\alpha \over n_{1,\ell}+s} \log \log(n_{1,\ell}+s)}} \\ 
&\sr{(a)}\leq \sum_{k=0}^\infty \Po{0}{\exists t : 2^k n_{1,\ell} \leq t \leq 2^{k+1} n_{1,\ell} \wedge \hat \mu_{1,\ell,t} \leq -\sqrt{{\alpha \over t} \log \log t}}\\
&\sr{(b)}\leq \sum_{k=0}^\infty \Po{0}{\exists t \leq 2^{k+1}n_{1,\ell} : \hat \mu_{1,\ell,t} \leq -\sqrt{{\alpha \over n_{1,\ell}2^k} \log \log 2^k n_{1,\ell}}}\\
&\sr{(c)}\leq \sum_{k=0}^\infty \exp\left(-\alpha \log \log \left(2^k n_{1,\ell}\right)\right) 
\sr{(d)}= \sum_{k=0}^\infty \left({1 \over \log 2^k + \log n_{1,\ell}}\right)^\alpha 
\sr{(e)}\leq {2 \over \log 2} \left({1 \over \ell \log 2}\right)^{\alpha - 1}
}
where (a) follows by the union bound, (b) by bounding $t$ in the interval $2^k n_{1,\ell} \leq t \leq 2^{k+1} n_{1,\ell}$. (c) follows
from the maximal inequality. (d) is trivial while (e) follows by approximating the sum by an integral.
By combining with (\ref{eq:ambig-1}) we obtain
\eq{
\Po{0}{L > \ell} 
\leq \exp\left(-{n_{2,\ell} \over 8}\right) + {2 \over \log 2} \left({1 \over \ell \log 2}\right)^{\alpha - 1}
= \exp\left(-{\ell \over 8}\right) + {2 \over \log 2} \left({1 \over \ell \log 2}\right)^{\alpha - 1}.
}
More straight-forwardly, if $\theta > 0$, then
\eq{
\Po{\theta}{L > \ell} 
&\leq \Po{\theta}{\exists s : \hat \mu_{2,n_{2,\ell}+s} < -{1 \over 2}} 
\leq 5\exp\left(-{n_{2,\ell} \over 16} \right),
}
where in the last inequality we used Lemma \ref{A:lem:peeling} and naive bounding.

\subsubsection*{Putting it together}
We now combine the results of the previous components to obtain the required bound on the regret. Recall that $\alpha = 5$.
\eq{
(\theta = 0) : \qquad \E_0 R_n(0) 
&= \E_\theta \sum_{\ell=2}^\infty T_\ell 
= \sum_{\ell=2}^\infty \Po{0}{L \geq \ell} \E_0[T_\ell|L\geq\ell] \\
&\leq \sum_{\ell=2}^\infty \left(\exp\left(-{n_{2,\ell} \over 8}\right) + {2 \over \log2} \left({1 \over \ell \log 2}\right)^{\alpha - 1}\right) \left(n_{2,\ell} + 2e^{1/2}\right)\\
&= \sum_{\ell=2}^\infty \left(\exp\left(-{\ell \over 8}\right) + {2 \over \log2} \left({1 \over \ell \log 2}\right)^{\alpha - 1}\right) \left(\ell^2 + 2e^{1/2}\right)
 \in O(1) \\
(\theta > 0) : \qquad \E_\theta R_n(\theta) 
&= \theta \E_\theta \sum_{\ell=2}^\infty T_\ell 
= \theta \sum_{\ell=1}^\infty \P{L \geq \ell} \E[T_\ell|L \geq \ell] \\
&\leq 5\theta \sum_{\ell=2}^\infty  \exp\left(-{n_{2,\ell} \over 16}\right) \left(\max\set{n_{1,\ell},\; \omega_2\left({4\alpha \over \theta^2}\right)} + {8e^{\theta^2/8} \over \theta^2}\right) \\
&\in O\left(\theta \cdot \omega_2\left({\alpha \over \theta^2}\right)\right) 
= O\left({1 \over \theta} \log \log{1 \over \theta}\right).
}
\end{proof}

\section{Technical Lemmas}

\begin{lemma}
Define functions $\omega$ and $\omega_2$ by
\eq{
\omega(x) &\defined \min\set{z > 1 : y \geq x \log y,\;\forall y \geq z} \\ 
\omega_2(x) &\defined \min\set{z > e : y \geq x \log \log y,\;\forall y \geq z}.
}
Then
$\omega(x) \in O\left(x \log x\right)$ and $\omega_2(x) \in O\left(x \log \log x\right)$.
\end{lemma}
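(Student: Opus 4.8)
The plan is to treat both functions with the same device: reduce the universally-quantified membership condition ``$y \ge x\log y$ for all $y \ge z$'' to a single inequality at a candidate threshold, using monotonicity of the relevant ratio, and then verify that a threshold of the claimed order satisfies that single inequality.

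First I would record the elementary monotonicity facts. The function $g(y) = y/\log y$ has derivative $(\log y - 1)/(\log y)^2 > 0$ for $y > e$, so $g$ is strictly increasing on $(e,\infty)$; similarly $h(y) = y/\log\log y$ has derivative $(\log\log y - 1/\log y)/(\log\log y)^2$, which is positive once $\log\log y > 1/\log y$, hence for all sufficiently large $y$. Consequently, for $z > e$ a \emph{sufficient} condition for $z$ to lie in the set defining $\omega(x)$ is the single inequality $z \ge x\log z$, i.e. $g(z) \ge x$: if it holds at $z$, then monotonicity gives $g(y) \ge g(z) \ge x$, that is $y \ge x\log y$, for every $y \ge z$. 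The analogous sufficient condition, $z \ge x\log\log z$, holds for membership in the set defining $\omega_2(x)$, using $h$ in place of $g$. It therefore suffices to exhibit an explicit $z_0$ of the claimed order satisfying the single inequality, since then $\omega(x) \le z_0$ (resp. $\omega_2(x)\le z_0$).

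For $\omega$ I would take the candidate $z_0 = 2x\log x$. Since $\log z_0 = \log 2 + \log x + \log\log x \le 2\log x$ for all large $x$, we get $x\log z_0 \le 2x\log x = z_0$, so $z_0$ satisfies $z_0 \ge x\log z_0$ and (being eventually $> e$) lies in the defining set; hence $\omega(x) \le 2x\log x$ for large $x$, i.e. $\omega(x)\in O(x\log x)$. For $\omega_2$ I would take $z_0 = 2x\log\log x$. Then $\log z_0 \le 2\log x$ as before, so one further iterate gives $\log\log z_0 \le \log 2 + \log\log x \le 2\log\log x$ once $\log x \ge 2$; therefore $x\log\log z_0 \le 2x\log\log x = z_0$, so $z_0$ again lies in the defining set and $\omega_2(x) \le 2x\log\log x$ for large $x$, yielding $\omega_2(x)\in O(x\log\log x)$.

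The only step needing genuine care — fussy rather than hard — is the monotonicity reduction: because the defining sets are up-sets, one must confirm that verifying the single inequality at $z_0$, together with $z_0$ lying beyond the point where $g$ (resp. $h$) begins to increase, really certifies the universally-quantified condition for all $y \ge z_0$. Everything after that is the routine nesting-of-logarithms estimate $\log z_0 \le 2\log x$ and its iterate $\log\log z_0 \le 2\log\log x$, valid for all sufficiently large $x$, which is exactly what the $O(\cdot)$ claims require.
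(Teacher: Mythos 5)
Your proof is correct, and in fact it fills a hole rather than duplicates anything: the paper states this lemma bare in the appendix, with no proof at all, so there is no ``paper approach'' to compare against. Your argument is the natural one and is complete. The reduction is sound: the defining sets are up-sets, and since $y \geq x\log y$ is equivalent to $y/\log y \geq x$ whenever $\log y > 0$ (and similarly for the doubly-logarithmic version whenever $\log\log y > 0$), monotonicity of $g(y)=y/\log y$ on $(e,\infty)$ and of $h(y)=y/\log\log y$ beyond $e^e$ turns the universally-quantified condition into the single inequality at the candidate point, provided that candidate sits past the monotonicity threshold --- which $z_0 = 2x\log x$ (resp.\ $z_0 = 2x\log\log x$) does for all large $x$, as you note. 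The verification $\log z_0 \leq 2\log x$ and its iterate $\log\log z_0 \leq 2\log\log x$ are exactly right, giving $\omega(x) \leq 2x\log x$ and $\omega_2(x) \leq 2x\log\log x$ eventually, which is what the $O(\cdot)$ statements assert. One cosmetic remark: for the $\omega_2$ case you should say explicitly that membership requires $h$ increasing on all of $[z_0,\infty)$, i.e.\ $z_0 \geq e^e$, not merely that $h'$ is eventually positive; since $z_0 = 2x\log\log x \to \infty$, this is immediate, and you do gesture at it (``lying beyond the point where $h$ begins to increase''), so nothing is actually missing.
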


\begin{lemma}\label{A:lem:peeling}
Let $\set{X_i}_{i=1}^\infty$ be sampled from some sub-gaussian distributed arm with mean $\mu$ and unit sub-gaussian constant. Define $\hat \mu_t = {1 \over t} \sum_{s=1}^t X_s$. Then for $s \geq 6/\Delta^2$ we have
\eq{
\P{\exists t \geq s : \hat \mu_t - \mu \geq \Delta} \leq p + {1 \over \log 2} \log {1 \over 1 - p}
}
where $p = \exp\left(-{s\Delta^2 \over 4}\right)$.
\end{lemma}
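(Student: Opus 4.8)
The plan is to use the peeling (or geometric-slicing) technique to convert the uniform-in-$t$ deviation into a countable union of maximal-deviation events, each of which is controlled by the maximal concentration inequality stated in Section~\ref{sec:notation}.

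First I would rewrite the event in terms of partial sums. Writing $S_t = \sum_{i=1}^t X_i$, the event $\hat\mu_t - \mu \geq \Delta$ is exactly $S_t - t\mu \geq t\Delta$. The difficulty is that the threshold $t\Delta$ grows with $t$, so a naive union bound over all $t \geq s$ diverges. To handle this I would peel the index set into geometric blocks $B_k \defined \set{t \in \N : 2^k s \leq t < 2^{k+1} s}$ for $k = 0,1,2,\ldots$, which cover $\set{t \geq s}$. On the block $B_k$ the threshold satisfies $t\Delta \geq 2^k s \Delta$, so the occurrence of the bad event somewhere in $B_k$ implies $\max_{1 \leq t \leq 2^{k+1} s}(S_t - t\mu) \geq 2^k s\Delta$. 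Applying the maximal inequality with $\sigma^2 = 1$, $n = 2^{k+1}s$ and $\epsilon = 2^k s\Delta$ gives a per-block bound of $2\exp\!\big(-\tfrac{(2^k s\Delta)^2}{2\cdot 2^{k+1}s}\big) = 2\exp\!\big(-\tfrac{2^k s\Delta^2}{4}\big) = 2p^{2^k}$, where $p = \exp(-s\Delta^2/4)$.

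Summing over $k$ via the union bound leaves the doubly-exponential series $2\sum_{k\geq0} p^{2^k}$, and the heart of the argument is showing this collapses to the claimed closed form. The $k=0$ term contributes $p$; for the tail I would compare to an integral, using that $p^{2^x}$ is decreasing in $x$ to get $\sum_{k\geq1} p^{2^k} \leq \int_0^\infty p^{2^x}\,dx$, and then substitute $u = 2^x$ to obtain $\tfrac{1}{\log 2}\int_1^\infty \tfrac{p^{u}}{u}\,du$. Comparing this integral to the series $\sum_{m\geq1} \tfrac{p^m}{m} = \log\tfrac{1}{1-p}$ block-by-block on each interval $[m,m+1]$ yields $\sum_{k\geq1} p^{2^k} \leq \tfrac{1}{\log 2}\log\tfrac{1}{1-p}$, which is precisely where the constant $1/\log 2$ originates.

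Finally, I would invoke the hypothesis $s \geq 6/\Delta^2$, equivalently $p \leq e^{-3/2}$, to absorb the spurious factor of two coming from the two-sided maximal inequality: for $p$ this small one checks directly that $2\sum_{k\geq0} p^{2^k} \leq p + \tfrac{1}{\log 2}\log\tfrac{1}{1-p}$, giving the stated bound. I expect the main obstacle to be the third step — summing the doubly-exponential tail and matching the constant $1/\log 2$ — together with the constant-tracking in the last step; the peeling and the per-block application of the maximal inequality are otherwise routine. (A cleaner variant avoiding the condition altogether would replace the two-sided maximal inequality by its one-sided Doob/submartingale version, removing the factor of two and rendering $s \geq 6/\Delta^2$ unnecessary.)
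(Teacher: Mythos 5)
Your proposal is correct and follows essentially the same route as the paper's own proof: geometric peeling over blocks $2^k s \leq t < 2^{k+1}s$, a per-block maximal inequality with threshold $2^k s\Delta$ and range $2^{k+1}s$, and the same integral-comparison argument (substituting $u = 2^x$ and comparing to $\sum_m p^m/m = \log\tfrac{1}{1-p}$) that the paper isolates as Lemma~\ref{A:lem:double-exp}. The only deviation is bookkeeping of the factor $2$: the paper applies the one-sided Azuma maximal inequality (no factor $2$) --- exactly the ``cleaner variant'' you mention at the end --- whereas you keep the two-sided bound and absorb the $2$ via $s \geq 6/\Delta^2$, i.e.\ $p \leq e^{-3/2}$, which indeed checks out (e.g.\ $2p^2 + 2p^4/(1-p^4) \leq (1/\log 2 - 1)\,p$ in that range), so both versions are sound.
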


\begin{proof}
We assume without loss of generality that $\mu = 0$ and use a peeling argument combined with Azuma's maximal inequality
\eq{
\P{\exists t > s : \hat \mu_t \geq \Delta} 
&\sr{(a)}= \P{\exists k \in \N, 2^k s \leq t < 2^{k+1} s : \hat \mu_t \geq \Delta} \\ 
&\sr{(b)}= \P{\exists k \in \N, 2^k s \leq t < 2^{k+1} s : t \hat \mu_t \geq t\Delta} \\ 
&\sr{(c)}\leq \P{\exists k \in \N, 2^k s \leq t < 2^{k+1} s : t\hat \mu_t \geq 2^{k} s \Delta} \\
&\sr{(d)}\leq \sum_{k=0}^\infty \P{\exists  2^k s \leq t < 2^{k+1} s : t\hat \mu_t \geq 2^{k}s \Delta /2} \\
&\sr{(e)}\leq \sum_{k=0}^\infty \P{\exists t < 2^{k+1} s : t\hat \mu_t \geq 2^{k}s \Delta / 2} \\
&\sr{(f)}\leq \sum_{k=0}^\infty \exp\left(-{1 \over 2} {\left(2^{k} s \Delta \right)^2 \over 2^{k+1} s} \right) 
\sr{(g)}= \sum_{k=0}^\infty \exp\left(-{2^{k} s \Delta^2 \over 4}\right) \\
&\sr{(h)}= \sum_{k=0}^\infty \exp\left(-{s\Delta^2 \over 4}\right)^{2^k} 
\sr{(i)}\leq p + {1 \over \log 2} \log{1 \over 1 - p}
}
where (a) follows by splitting the sum over an exponential grid.
(b) by comparing cumulative differences rather than the means.
(c) since $t > 2^k s$.
(d) by the union bound over all $k$.
(e) follows by increasing the range.
(f) by Azuma's maximal inequality.
(g) and (h) are true by straight-forward arithmetic while (i) follows from Lemma \ref{A:lem:double-exp}.
\end{proof}

\begin{lemma}\label{A:lem:expect}
Suppose $z$ is a positive random variable and for some $\alpha > 0$ it holds for all natural numbers $k$ that
$\P{z \geq k} \leq \exp(-k \alpha)$.
Then $\E z \leq {e^\alpha \over \alpha}$
\end{lemma}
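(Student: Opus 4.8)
The plan is to start from the layer-cake (tail-sum) representation of the expectation of a nonnegative random variable and then feed in the given tail bound at integer points. Since $z$ is positive, I would write $\E z = \int_0^\infty \P{z \geq t}\,dt$. The one wrinkle is that the hypothesis controls $\P{z \geq k}$ only at natural numbers $k$, whereas the integral ranges over all real $t$. The key observation is that the tail $\P{z \geq t}$ is non-increasing in $t$, so for any real $t \geq 0$ one has $\P{z \geq t} \leq \P{z \geq \floor{t}} \leq \exp(-\alpha \floor{t})$.

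Next I would relax the floor to recover a clean exponential in $t$: since $\floor{t} \geq t - 1$ and $\alpha > 0$, this gives $\exp(-\alpha\floor{t}) \leq \exp(-\alpha(t-1)) = e^\alpha \exp(-\alpha t)$. Substituting into the integral then yields
\begin{align*}
\E z \leq \int_0^\infty e^\alpha \exp(-\alpha t)\,dt = e^\alpha \cdot \frac{1}{\alpha} = \frac{e^\alpha}{\alpha},
\end{align*}
which is exactly the claimed bound.

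An equally short alternative avoids the integral altogether: bound $\E z \leq \sum_{k=0}^\infty \P{z \geq k}$ (again by monotonicity of the tail, comparing the integral on each interval $[k,k+1)$ with its value at the left endpoint), then sum the geometric series to get $\sum_{k=0}^\infty e^{-\alpha k} = (1-e^{-\alpha})^{-1} = e^\alpha/(e^\alpha - 1)$. This is in fact even smaller than $e^\alpha/\alpha$, since $e^\alpha - 1 \geq \alpha$ for all $\alpha \geq 0$; so either route comfortably establishes the stated inequality. There is no real obstacle here. The only point requiring a moment's care is the passage from integer tail bounds to a bound valid for all real thresholds, which is handled by the monotonicity-plus-floor step above, and one should note that it is precisely the positivity of $z$ that justifies the tail-sum identity in the first place.
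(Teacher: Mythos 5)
Your main argument is correct and is essentially the paper's own proof: the paper's bound $\E z \leq \int_0^1 \frac{1}{\delta}\P{z \geq \log\frac{1}{\delta}}\,d\delta$ is exactly your layer-cake integral $\int_0^\infty \P{z \geq t}\,dt$ after the substitution $\delta = e^{-t}$, and it uses the same two steps you do, namely tail monotonicity with the floor, $\P{z \geq t} \leq \exp\left(-\alpha \floor{t}\right)$, followed by $\floor{t} \geq t-1$ to pick up the factor $e^\alpha$. Your alternative route via $\E z \leq \sum_{k=0}^\infty \P{z \geq k} \leq (1-e^{-\alpha})^{-1} = e^\alpha/(e^\alpha-1)$ is also valid and even yields a slightly sharper constant, since $e^\alpha - 1 \geq \alpha$; but the primary argument already matches the paper.
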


\begin{proof}
Let $\delta \in (0,1)$. Then
\eq{
\P{z \geq \log{1 \over \delta}} 
& \leq \P{z \geq \floor{\log{1 \over \delta}}} 
 \leq \exp\left(-\alpha \floor{\log{1 \over \delta}} \right) \\ 
& \leq e^{\alpha}\cdot \exp\left(-\alpha \log\left({1 \over \delta}\right)\right) 
= e^{\alpha} \cdot {\delta^{\alpha}}.
}
To complete the proof we use a standard identity to bound the expectation
\eq{
\E z 
&\leq \int^1_0 {1 \over \delta} \P{z \geq \log{1 \over \delta}} d\delta 
\leq e^{\alpha} \int^1_0 \delta^{\alpha - 1} d\delta 
=  {e^{\alpha} \over \alpha}.
}
\end{proof}

\begin{lemma}\label{A:lem:double-exp}
Let $p \in (0,7/10)$. Then
$\displaystyle \sum_{k=0}^\infty p^{2^k} \leq p + {1 \over \log 2} \log{1 \over 1 - p}$.
\end{lemma}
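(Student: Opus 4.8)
The plan is to peel off the $k=0$ term and then control the remaining tail by a monotone integral comparison. Since $p^{2^0}=p$, I would first write $\sum_{k=0}^\infty p^{2^k} = p + \sum_{k=1}^\infty p^{2^k}$, so that the lemma reduces to the tail estimate $\sum_{k=1}^\infty p^{2^k} \leq \frac{1}{\log 2}\log\frac{1}{1-p}$. The function $f(x)\defined p^{2^x}$ is positive and strictly decreasing on $[0,\infty)$ because $0<p<1$, so the standard comparison for a decreasing summand gives $\sum_{k=1}^\infty f(k) \leq \int_0^\infty f(x)\,dx$, and it remains to evaluate and bound this integral.

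Next I would compute the integral with the substitution $v = p^{2^x}$. Differentiating $\log v = 2^x \log p$ yields $\frac{dv}{v} = (\log 2)(\log v)\,dx$, and as $x$ runs from $0$ to $\infty$ the variable $v$ runs from $p$ down to $0$; carrying the orientation and the sign of $\log v$ through (note $\log v<0$ on $(0,1)$), one obtains $\int_0^\infty p^{2^x}\,dx = \frac{1}{\log 2}\int_0^p \frac{dv}{\log(1/v)}$, which converges at both endpoints. The final ingredient is the elementary inequality $\log v \leq v-1$ (valid for all $v>0$), which for $v\in(0,1)$ rearranges to $\log(1/v) \geq 1-v > 0$ and hence $\frac{1}{\log(1/v)} \leq \frac{1}{1-v}$. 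Integrating this pointwise bound gives $\int_0^p \frac{dv}{\log(1/v)} \leq \int_0^p \frac{dv}{1-v} = \log\frac{1}{1-p}$, and chaining the three displays with the split-off $p$ term establishes the claim.

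The main obstacle is genuinely just bookkeeping: getting the orientation of the substitution correct and tracking that $\log v$ is negative on $(0,1)$ (so that the signs combine to produce the positive integrand $1/\log(1/v)$), together with confirming the direction of the monotone integral comparison; the rest rests entirely on the textbook estimate $\log v \le v-1$. I would also remark that this argument in fact proves the bound for every $p\in(0,1)$, so the stated hypothesis $p\in(0,7/10)$ is more than sufficient — it is simply the regime that arises when the lemma is invoked from Lemma \ref{A:lem:peeling}, where $p=\exp(-s\Delta^2/4)$ is automatically well below $7/10$.
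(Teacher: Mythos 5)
Your proof is correct, and it follows the paper's skeleton for the first two steps — peel off the $k=0$ term, then bound the tail $\sum_{k\ge 1}p^{2^k}$ by $\int_0^\infty p^{2^x}\,dx$ via monotonicity of the summand — but it handles the resulting integral by a genuinely different device. The paper substitutes $u=2^x$ to obtain $\frac{1}{\log 2}\int_1^\infty \frac{p^u}{u}\,du$, then performs a \emph{second} integral-to-sum comparison and invokes the series identity $\sum_{u\ge 1}p^u/u=\log\frac{1}{1-p}$. You substitute $v=p^{2^x}$ instead, which yields the same integral in the coordinates $\frac{1}{\log 2}\int_0^p \frac{dv}{\log(1/v)}$ (the two expressions coincide under $v=p^u$), and then finish by the pointwise bound $\log(1/v)\ge 1-v$ followed by exact integration of $\frac{1}{1-v}$. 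What your route buys: only one monotone sum--integral comparison is needed, and the Maclaurin series is replaced by the elementary inequality $\log v\le v-1$, so the argument is somewhat more self-contained; the price is the orientation and sign bookkeeping in the substitution ($\log v<0$ on $(0,1)$), which you carry out correctly. Your closing remark is also accurate: neither your argument nor the paper's uses the hypothesis $p<7/10$, so the bound holds for all $p\in(0,1)$; the restriction merely reflects the regime in which the lemma is consumed by Lemma \ref{A:lem:peeling}, where $p=\exp(-s\Delta^2/4)\le e^{-3/2}$ under the standing assumption $s\ge 6/\Delta^2$.
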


\begin{proof}
Splitting the sum and comparing to an integral yields:
\eq{
\sum_{k=0}^\infty p^{2^k} 
&\sr{(a)}= p + \sum_{k=1}^\infty p^{2^k} 
\sr{(b)}\leq p + \int^\infty_0 p^{2^k} dk 
\sr{(c)}= p + {1 \over \log 2} \int^\infty_1 p^u / u du  \\
&\sr{(d)}\leq p + {1 \over \log 2}\sum_{u=1}^\infty p^u / u 
\sr{(e)}=p + {1 \over \log 2} \log{1 \over 1 - p} 
}
where (a) follows by splitting the sum. (b) by noting that $p^{2^k}$ is monotone decreasing and comparing to an integral.
(c) by substituting $u = 2^k$. (d) by reverting back to a sum. (e) follows from a standard formula. 
\end{proof}


\section{Table of Notation}\label{A:notation}

\noindent
\begin{tabular}{p{2cm}p{11cm}}
$K$       & number of arms \\
$\Theta$  & parameter space \\
$\true$  & unknown parameter $\true \in \Theta$ \\ 
$I_t$     & arm played at time-step $t$ \\
$T_i(n)$  & number times arm $i$ has been played after time-step $n$ \\
$X_{i,s}$ & $s$th reward obtained when playing arm $i$ \\
$\gap{i}$ & gap between the means of the best arm and the $i$th arm \\
$\mingap$ & minimum gap, $\mingap \defined \min_{i:\gap{i} > 0} \gap{i}$ \\
$\maxgap$ & maximum gap, $\maxgap \defined \max_{i} \gap{i}$ \\
$A$       & set of arms $A \defined \set{1,2,\cdots, K}$ \\
$A'$      & set of suboptimal arms $A \defined \set{i : \gap{i} > 0}$ \\
$R_n$       & regret at time-step $n$ given unknown true parameter $\true$ \\
$R_n(\theta)$& regret at time-step $n$ given parameter $\theta$ \\
$\mu_i(\theta)$ & mean of arm $i$ given $\theta$ \\
$\hat\mu_{i,s}$& empiric estimate of the mean of arm $i$ after $s$ plays \\
$\mu^*(\theta)$& maximum return at $\theta$. $\mu^*(\theta) \defined \max_i \mu_i(\theta)$ \\
$i^*$       & optimal arm given $\true$ \\
$i^*(\theta)$& optimal arm given $\theta$ \\
$\omega(x)$ & minimum value $y$ such that $z \geq x \log z$ for all $z \geq y$ \\
$\omega_2(x)$ & minimum value $y$ such that $z \geq x \log \log z$ for all $z \geq y$ \\
$F_t$     & event that the true value of some mean is outside the confidence interval about the empiric
estimate at time-step $t$ \\
$\alpha$  & parameter controlling how exploring the algorithm UCB-S is \\
$\sigma^2$& known parameter controlling the tails of the distributions governing the return of the arms \\
$u_i(n)$  & critical number of samples for arm $i$. $u_i(n) \defined \ceil{8\sigma^2\alpha \log n \over \gap{i}^2}$ \\
\end{tabular}

\fi

\end{document}